\DeclareMathOperator*{\EE}{\mathbb{E}}
\begin{document}
 
 
\title{Statistical Learning Guarantees for Group-Invariant Barron Functions}
\author{
  Yahong Yang\thanks{Email: \texttt{yyang3194@gatech.edu}} \quad
  Wei Zhu\thanks{Email: \texttt{weizhu@gatech.edu}}\\
  School of Mathematics, Georgia Institute of Technology, Atlanta, GA, USA
}
 
\maketitle
 \begin{abstract}
We investigate the generalization error of group-invariant neural networks within the Barron framework. Our analysis shows that incorporating group-invariant structures introduces a group-dependent factor $\delta_{G,\Gamma,\sigma} \le 1$ into the approximation rate. When this factor is small, group invariance yields substantial improvements in approximation accuracy. On the estimation side, we establish that the Rademacher complexity of the group-invariant class is no larger than that of the non-invariant counterpart, implying that the estimation error remains unaffected by the incorporation of symmetry. Consequently, the generalization error can improve significantly when learning functions with inherent group symmetries. We further provide illustrative examples demonstrating both favorable cases, where $\delta_{G,\Gamma,\sigma}\approx |G|^{-1}$, and unfavorable ones, where $\delta_{G,\Gamma,\sigma}\approx 1$. Overall, our results offer a rigorous theoretical foundation showing that encoding group-invariant structures in neural networks leads to clear statistical advantages for symmetric target functions.
\end{abstract}

\section{Introduction}
Deep neural networks are effective function approximators across machine learning and scientific computing, from regression~\cite{goodfellow2016deep} to solving partial differential equations~\cite{lagaris1998artificial,weinan2017deep,raissi2019physics,hao2024newton,he2023mgno}. On the theoretical side, extensive work has characterized approximation and generalization for broad classes of targets, such as smooth or Barron-type functions~\cite{hornik1991approximation,barron2002universal,yarotsky2017error,zhou2020universality,lu2021learning,yang2023nearly,yang2025deep}.

A growing literature shows that building known structure into models---such as multiscale behavior \cite{fan2019multiscale,liu2020multi,hao2024multiscale}, low-dimensional geometry \cite{chen2019efficient,poggio2017and,beck2019machine}, or group symmetry (invariance and equivariance)~\cite{chen2023implicit,chen2023sample,chen2025statistical,tahmasebi2023sample}---can substantially improve both approximation and generalization. This work focuses on two-layer networks that enforce invariance under a finite group. Within a Barron-space framework, we ask a concrete question: when the target is group-invariant, how much can one gain statistically by encoding that symmetry in the learner?

Statistical guarantees for learning with invariance have been developed along several fronts. Yarotsky~\cite{yarotsky2022universal} established universal approximation results for invariant maps by neural networks, though without addressing potential improvements in approximation rates. Sokoli\'{c} et al.~\cite{pmlr-v54-sokolic17a} analyze generalization of invariant classifiers under structural assumptions on the domain and group action. Chen et al.~\cite{chen2023sample} study divergence estimation for group-invariant distributions, revealing nuanced sample-complexity reductions from symmetry; this line extends to continuous Lie group actions on compact manifolds in~\cite{tahmasebi2024sample}. Similar gains have been demonstrated for generative models, including GANs~\cite{chen2025statistical} and diffusion models~\cite{chen2024equivariant}. Provable improvements in generalization have also been established for linear predictors~\cite{elesedy2021provably} and kernel methods~\cite{elesedy2021provably_kernel}. Beyond expressivity and sample complexity, symmetry also affects optimization: Chen and Zhu~\cite{chen2023implicit} analyze the implicit bias of multilayer linear group-invariant networks, clarifying how training dynamics interact with symmetry.

In this paper, we rigorously analyze the approximation and estimation error of group-invariant networks within the Barron-space framework \cite{ma2022barron,li2020complexity,ma2019priori}. On the approximation side, we prove that group averaging improves the approximation rate by a multiplicative factor $\delta_{G,\Gamma,\sigma}\le 1$ (Theorem~\ref{app}). Thus, for fixed width $m$, the approximation error decreases in direct proportion to $\delta_{G,\Gamma,\sigma}$. On the estimation side, we bound the empirical Rademacher complexity and show that---without additional assumptions on the data distribution---the group-invariant class does not have higher complexity than its non-invariant counterpart (Proposition~\ref{prop:boundcom}). Combining these results yields a generalization bound (Theorem~\ref{thm:gen}) in which the $G$-dependent improvement enters solely through $\delta_{G,\Gamma,\sigma}$.

Finally, we note that the factor $\delta_{G,\Gamma,\sigma}$ depends in a nuanced way on the group action, the activation function $\sigma$, and the admissible measure family $\Gamma$. We illustrate this with examples showing when $\delta_{G,\Gamma,\sigma}\approx |G|^{-1}$ (substantial gains) and when $\delta_{G,\Gamma,\sigma}\approx 1$ little or no gain), underscoring that the benefits of symmetry are case dependent.

\section{Statistical learning of group-invariant Barron functions}\label{sec:app}

\subsection{Background and motivation}
\label{sec:background}
We begin with a brief review of group actions and group invariance.

\begin{definition}[Group and group actions]\label{def:group_representation}
A set $G$ equipped with a binary operation is called a group if the operation satisfies the axioms of associativity, identity, and invertibility. Let $\Omega$ be a domain. $T:G\times \Omega\to \Omega$ is a group action on $\Omega$ if
\begin{align}
    T_e(\vx) = \vx, \quad \text{and}\quad T_{g_2}\circ T_{g_1}(\vx)=T_{g_2\cdot g_1}(\vx), \quad \forall g_1, g_2\in G, ~\vx\in \Omega,
\end{align}
where $e\in G$ is the identity element. When the group action $T$ is clear from the context, we also abbreviate the group action
$T_g\vx$ simply as $g\vx$.
\end{definition}

In this paper, we focus on \emph{linear group actions}. Specifically, we assume that the domain $\Omega \subset V$, where $V$ is a vector space, and define
\begin{align}
    T_g(\vx)=\rho(g)(\vx)
\end{align}
where $\rho(g) \in \mathrm{GL}(V)$ is an invertible linear transformation on $V$. In other words, $\rho : G \to \mathrm{GL}(V)$ is a group homomorphism. With this definition in place, we introduce the notion of \emph{group-invariant} functions.

\begin{definition}[\(G\)-invariant function]
\label{def:inv}
A function \(f : \Omega \to \mathbb{R}\) is called \emph{\(G\)-invariant} if
\begin{align}
\label{eq:g-inv-def}
   f(g\vx) = f(\vx), \qquad \forall\,g\in G,\;\vx\in \Omega. 
\end{align}
\end{definition}

A more general notion is that of a \emph{\(G\)-equivariant} function. Let $T^{(1)}: G \times \Omega_1 \to \Omega_1$ and $T^{(2)}: G \times \Omega_2 \to \Omega_2$ be two group actions on $\Omega_1$ and $\Omega_2$, respectively. For a mapping $f: \Omega_1 \to \Omega_2$, we say that $f$ is $G$-equivariant if
\begin{align}
    f(T_g^{(1)} \vx) = T_g^{(2)}f(\vx), \quad \forall g\in G, ~\vx\in\Omega_1
\end{align}
In particular, when $T_g^{(2)} \equiv \mathrm{Id}$, this reduces to the definition of a \(G\)-invariant function~\eqref{eq:g-inv-def}. This definition also includes anti-symmetric functions, as discussed in \cite{abrahamsen2025anti}.
In this work, we restrict attention to the special case of \(G\)-invariant functions and leave the broader \(G\)-equivariant setting for future study.

Throughout the paper, we consider $\Omega \subset \mathbb{R}^d$ to be a bounded domain endowed with a linear action of a finite group $G$, denoted by $T_g(\vx)=g\vx$.
\begin{assumption}
\label{assump:bounded_omega}
    Without loss of generality, assume that $\Omega$ is contained in the unit $\ell_\infty$ ball, i.e.,
    \[
        \max_{\vx \in \Omega} \|\vx\|_{\infty} \leq 1.
    \]
\end{assumption}
Our goal is to establish statistical learning guarantees for a $G$-invariant target function $f_*$, given finitely many noisy observations on the domain $\Omega$. We assume that the target function $f_*$ lies in a \textit{generalized Barron space}~\cite{ma2022barron}, defined as follows.

\begin{definition}[$\sigma$-activated $\Gamma$-Barron space]
Let \(d\in\mathbb{N}\) and \(\Omega\subset\mathbb{R}^{d}\).
Consider an activation function $\sigma:\mathbb{R}\to\mathbb{R}$ satisfying the mild condition to be detailed in Assumption~\ref{assump:act}, and a subset $\Gamma\subset \mathcal{P}(\mathbb{R}^{d+2})$ of probability measures on $\mathbb{R}^{d+2}$.

A function \(f:\Omega\to\mathbb{R}\) is said to belong to the \emph{($\sigma$-activated) $\Gamma$-Barron space}, denoted \(\mathcal{B}_{\Gamma}(\Omega)\),  
if there exists a probability measure \(\rho\in\Gamma\) such that
\[
f(\vx) = f_{\rho}(\vx)
:= \mathbb{E}_{(a,\vw,b)\sim\rho}
       \bigl[ a\,\sigma(\vw\!\cdot\!\vx + b) \bigr],
\qquad
\forall\,\vx\in\Omega,
\]
and its \emph{$\Gamma$-Barron norm} is finite:
\begin{align}
\label{eq:def_barron_norm}
    \|f\|_{\mathcal{B}_{\Gamma}}
:= \inf_{\rho\in\Gamma:\,f=f_{\rho}}
  \sqrt{
    \mathbb{E}_{(a,\vw,b)\sim\rho}
    \,|a|^2\,\bigl(\|\vw\|_{1} + |b| + 1\bigr)^2
  } < \infty.
\end{align}
Furthermore, we denote $\mathcal{B}_{\Gamma}^G$ the subset of $G$-invariant $\Gamma$-Barron functions, i.e.,
\begin{align}
    \mathcal{B}_{\Gamma}^G(\Omega) \coloneqq \left\{f\in \mathcal{B}_{\Gamma}:~f(g\vx) = f(\vx), ~\forall g\in G, ~\forall \vx\in \Omega\right\}
\end{align}
\end{definition}

When $\Gamma$ is taken to be the set of all probability measures on $\mathbb{R}^{d+2}$, our definition of the $\Gamma$-Barron norm reduces to the standard Barron norm introduced in \cite{ma2022barron} for general activation functions $\sigma$. 
In our formulation, the additive constant \(1\) in \(\|\vw\|_1 + |b| + 1\) is included deliberately in the definition of the $\Gamma$-Barron norm~\eqref{eq:def_barron_norm}. This adjustment simplifies the subsequent generalization error analysis by ensuring uniform bounds on both the approximation error and the Rademacher complexity of the neural network class, in line with \cite{li2020complexity,ma2022barron}.  
 It is also worth noting that, for $\sigma = \operatorname{ReLU}$, our norm is equivalent to the standard Barron norm without the $+1$ term, thanks to the homogeneity of the ReLU function and the infimum in the definition.

We next state a mild assumption on the activation functions $\sigma$ appearing in the above definition of Barron spaces.

\begin{assumption}\label{assump:act}
    The activation function $\sigma:\mathbb{R}\to\mathbb{R}$ is a Lipschitz continuous function with Lipschitz constant \(L_\sigma\).  
    Moreover,  there exists a constant $\gamma(\sigma) > 0$ such that for any $\epsilon > 0$, $\sigma$ can be uniformly approximated on $\mathbb{R}$ by a ReLU neural network of the form
    \[
    \sum_{k=1}^K \bar{a}_k \operatorname{ReLU}\left(\bar{w}_k x + \bar{b}_k\right) \coloneqq
    \sum_{k=1}^K \bar{a}_k \max\left(\bar{w}_k x + \bar{b}_k, 0\right)
    \]
    satisfying
    \[
    \sup_{x\in\mathbb{R}} \left| \sigma(x) - \sum_{k=1}^K \bar{a}_k \operatorname{ReLU}(\bar{w}_k x + \bar{b}_k) \right| \le \epsilon,
    \quad
    \sum_{k=1}^K |\bar{a}_k| \big(|\bar{w}_k| + |\bar{b}_k|\big) \le \gamma(\sigma).
    \]
\end{assumption}
\begin{remark}\label{rmk: act}
Many commonly used activation functions satisfy Assumption~\ref{assump:act}. Since Lipschitz continuity is straightforward to verify for the examples below, we omit the details for brevity.
\begin{enumerate}
    \item \textbf{\em ReLU}: The standard ReLU function trivially satisfies the property with $\gamma(\operatorname{ReLU})=1$.
    \item \textbf{\em Piecewise linear activations}: Any piecewise linear activation can be represented exactly by a ReLU network. Examples include the hat functions, LeakyReLU, HardSigmoid, and HardTanh, defined as
    \begin{align*}
    \operatorname{LeakyReLU}(x) &= \lambda x + (1-\lambda)\operatorname{ReLU}(x), \quad 0 < \lambda < 1,\\    
    {\rm HardSigmoid}(x) &= \max\!\left(0, \min\!\left(1, \frac{x+1}{2}\right)\right), \\
    {\rm HardTanh}(x) &= \max(-1, \min(1, x)).
    \end{align*}
    \item \textbf{\em Functions satisfying the condition of \cite[Theorem~1]{li2020complexity}}:  Let $\sigma:\mathbb{R}\to\mathbb{R}$ be continuous, twice weakly differentiable, and assume $\sigma''$ is locally Riemann integrable. If
    \begin{align}
    \label{eq:gamm_0_sigma}
        \gamma_0(\sigma) \coloneqq \int_{\mathbb{R}} |\sigma''(x)|(|x|+1)\,\D x < +\infty,
    \end{align}
    then Assumption~\ref{assump:act} holds.  Condition~\eqref{eq:gamm_0_sigma} encompasses a broad class of widely used activations, such as
    \begin{itemize}
        \item \textbf{\em Non-smooth ReLU-shaped activations}. Examples include:
        \begin{align}
        \label{eq:celu_selu}
            \mathrm{CELU}(x) & =
            \begin{cases}
            x, & x \ge 0, \\
            \beta(e^{x/\beta} - 1), & x < 0,
            \end{cases}, ~~
            \mathrm{SELU}(x) & =
            \begin{cases}
            \lambda x, & x \ge 0, \\
            \lambda \alpha(e^x - 1), & x < 0,
            \end{cases}            
        \end{align}
        where $\alpha \in \mathbb{R}$, $\beta, \lambda > 0$.  When $\alpha = 1$, these functions are twice weakly differentiable, with
        \[
        |\sigma''(x)| \le \frac{C}{e^x} \ \text{as } x \to -\infty,
        \quad
        \sigma''(x) = 0 \ \text{for } x > 0,
        \]
        which ensures that $\gamma_0(\sigma)$ is finite.
        \item \textbf{\em Smooth Heaviside-type activations}. These are smooth approximations of the step function, such as:
        \[
        \sigma(x) = {\rm Sigmoid}(x), \quad {\rm Tanh}({\rm Softplus}(x)), \quad {\rm erf}(x).
        \]
        Such functions are in $C^\infty(\mathbb{R})$ and all derivatives $\sigma^{(k)}(x)$ decay exponentially as $x \to \pm\infty$ for any finite $k$.  
        Hence, $\gamma_0(\sigma) < \infty$ and they satisfy Assumption~\ref{assump:act} via \cite[Theorem~1]{li2020complexity}.
        \item \textbf{\em Smooth ReLU-shaped activations}. For any $\sigma$ in the previous category, the function
        \[
        x \cdot \sigma(x)
        \]
        also satisfies condition~\eqref{eq:gamm_0_sigma}.
        Examples include:
        \[
        \mathrm{SiLU}(x) = x \cdot {\rm Sigmoid}(x),~
        \mathrm{Mish}(x) = x \cdot {\rm Tanh}({\rm Softplus}(x)),~
        \mathrm{GELU}(x) = x \cdot {\rm erf}(x)
        \]
        These are functions in $C^\infty(\mathbb{R})$ and have normalized forms $\widetilde{\sigma}(x) = \sigma(x)/x$ equal to the Heaviside-type functions in the previous category, whose derivatives decay exponentially, ensuring $\gamma_0(\sigma) < \infty$.
        \item \textbf{\em Compactly supported activations}. If $\sigma$ has continuous second derivatives and compact support, then by the Extreme Value Theorem,  
        $\gamma_0(\sigma) < \infty$.
    \end{itemize}
\end{enumerate}
More activations satisfying Assumption~\ref{assump:act} are discussed in detail in \cite{li2020complexity,yang2025deep}.
\end{remark}

Functions in the $\Gamma$-Barron space can be viewed as infinite-width, two-layer $\sigma$-activated networks whose parameters $(a,\vw,b)$ are drawn from a probability measure in $\Gamma \subset \mathcal{P}(\mathbb{R}^{d+2})$. A substantial body of literature studies the generalization theory of learning Barron functions using finite-width two-layer networks of the form
\begin{align}
    f_m(\vx, \Theta) = \frac{1}{m}\sum_{i=1}^m a_i\sigma\left(\vw_i\cdot \vx + b_i\right),
\end{align}
where $\Theta=(a_i, \vw_i, b_i)_{i=1}^m$ is the collection of all network parameters;
see, e.g.,~\cite{ma2022barron,ma2019priori,li2020complexity,siegel2024sharp,siegel2020approximation}.

In our setting, the target function $f_*\in\mathcal{B}_\Gamma^G(\Omega)$ is assumed to be $G$-invariant (Definition~\ref{def:inv}), motivating the use of $G$-invariant networks obtained by group-averaging:
\begin{align}
    f_m^G(\vx;\Theta) = \mathcal{G}f_m(\vx;\Theta)= \frac{1}{m|G|}\sum_{i=1}^m a_i\sum_{g\in G}\sigma\left(\vw_i\cdot g\vx + b_i\right),
\end{align}
where
\begin{align}
    \mathcal{G}f(\vx) \coloneqq  \frac{1}{|G|} \sum_{g\in G} f(g\vx).
\end{align}

Generalization error for Barron spaces has been extensively studied in both under- and over-parameterized regimes. The under-parameterized case is addressed in \cite{lu2021priori,xu2024refined}, while the over-parameterized regime has received greater attention due to exponential decay in the early stage of training explained by neural tangent kernel theory \cite{allen2019convergence,arora2019exact,du2018gradient,yang2025homotopy,chen2023phase}. In this setting, works such as \cite{ma2022barron,ma2019priori,li2020complexity,yang2024nonparametric,liu2024learning} establish and refine convergence rates: $\mathcal{O}(M^{-1/2})$ in \cite{ma2022barron,ma2019priori,li2020complexity}, improved to $M^{-(d+2)/(2d+2)}$ in \cite{liu2024learning}, and further to $M^{-(d+3)/(2d+3)}$ in \cite{yang2024nonparametric} using local Rademacher complexity and spherical harmonics. These refinements focus on ReLU activation, while sharper rates for more general activations remain open.

In the following subsections, we adopt the framework of \cite{ma2022barron,ma2019priori,li2020complexity} to rigorously analyze the approximation error, the estimation error, and the resulting generalization error for learning $G$-invariant $\sigma$-activated Barron functions. Our goal is to quantify how much incorporating group-invariant structure can improve learning performance compared with standard (non-invariant) networks.

\subsection{Approximation error}
\label{sec:approx_error}





Let $\mu$ be any probability measure on the domain $\Omega$. We now present our main approximation theorem for $G$-invariant networks in approximating $G$-invariant Barron functions $f_* \in \mathcal{B}_\Gamma^G(\Omega)$, stated with respect to the $L^2(\Omega,\D\mu) = L^2(\D\mu)$ norm:

\begin{theorem}\label{app}
Let $\sigma$ be a Lipschitz continuous function with Lipschitz constant \(L_\sigma\), and let \(f_*\in\mathcal{B}_\Gamma^G(\Omega)\) be a \(G\)-invariant function belonging to the $\sigma$-activated \(\Gamma\)-Barron space. Then there exists a shallow neural network of the form
\begin{align*}
f_m^G(\vx, \Theta) = \frac{1}{m|G|}\sum_{i=1}^{m}
         a_i \sum_{g\in G} \sigma\!\bigl(\vw_i\cdot g\vx + b_i\bigr)
         = \frac{1}{m|G|}\sum_{i=1}^{m}
         a_i \sum_{g\in G} \sigma\!\bigl((g^\top\vw_i)\cdot \vx + b_i\bigr)
\end{align*}
where $\Theta = (a_i, \vw_i, b_i)_{i=1}^m$,  such that
\begin{align}
\label{eq:G_approx_m}
\left\{
\begin{aligned}
\|f_* - f_m^G(\cdot; \Theta)\|_{L^2(\D \mu)}^{2}
\le \frac{3 (L_\sigma+|\sigma(0)|)^2 \delta_{G,\Gamma,\sigma} \|f_*\|_{\fB_\Gamma}^{2}}{m},
\\
\frac{1}{m}\sum_{i=1}^{m}|a_i|^2\,(\|\vw_i\|_{1}+|b_i|+1)^2
\le 2\|f_*\|^2_{\fB_\Gamma},
\end{aligned}\right.
\end{align}
where
\[
\delta_{G,\Gamma,\sigma}:=\min\{1,\delta_{G,\Gamma,\sigma}^*\},\quad\delta_{G,\Gamma,\sigma}^*
= \sup_{\rho\in\Gamma,\ \vx\in\Omega}
\frac{
\mathbb{E}_{(a,\vw,b)\sim\rho}
\left[\left(a\,\fG\,\sigma(\vw\cdot\vx + b)\right)^2\right]
}{
\mathbb{E}_{(a,\vw,b)\sim\rho}
\left[\left(a\,\sigma(\vw\cdot\vx + b)\right)^2\right]
}.
\]

\end{theorem}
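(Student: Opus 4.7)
The plan is to use Maurey's empirical (Monte Carlo) sampling argument, adapted to the $G$-invariant setting. Since $f_*\in\mathcal{B}^G_\Gamma$, I would first choose $\rho\in\Gamma$ with $f_*=f_\rho$ and $\EE_{(a,\vw,b)\sim\rho}[|a|^2(\|\vw\|_1+|b|+1)^2]\le(1+\varepsilon)\|f_*\|^2_{\mathcal{B}_\Gamma}$. Applying the averaging operator $\mathcal{G}$ to the integral representation and using $f_*=\mathcal{G}f_*$ (by $G$-invariance) gives
\[
f_*(\vx)\;=\;\EE_{(a,\vw,b)\sim\rho}\!\bigl[a\,\mathcal{G}\sigma(\vw\cdot\vx+b)\bigr],
\]
an expectation of \emph{group-averaged atoms}. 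Drawing $(a_i,\vw_i,b_i)\stackrel{\mathrm{iid}}{\sim}\rho$ and setting $f_m^G(\vx;\Theta)=\tfrac{1}{m}\sum_i a_i\,\mathcal{G}\sigma(\vw_i\cdot\vx+b_i)$ then produces exactly the $G$-invariant network in the statement.

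Next I would compute the expected squared $L^2(\D\mu)$ error. By unbiasedness this equals $\tfrac{1}{m}\int_\Omega\mathrm{Var}_\rho(a\,\mathcal{G}\sigma(\vw\cdot\vx+b))\,\D\mu(\vx)$, which is bounded above by $\tfrac{1}{m}\int_\Omega\EE_\rho[(a\,\mathcal{G}\sigma(\vw\cdot\vx+b))^2]\,\D\mu(\vx)$. Here is where $\delta^*_{G,\Gamma,\sigma}$ enters cleanly: its definition as a pointwise supremum-of-ratios is engineered precisely so that one can replace $\mathcal{G}\sigma$ by $\sigma$ inside the integrand at the cost of the single scalar prefactor $\delta^*_{G,\Gamma,\sigma}$. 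Assumption~\ref{assump:bounded_omega} combined with the Lipschitz estimate $|\sigma(t)|\le|\sigma(0)|+L_\sigma|t|$ then yields $|\sigma(\vw\cdot\vx+b)|\le(L_\sigma+|\sigma(0)|)(\|\vw\|_1+|b|+1)$ for all $\vx\in\Omega$, so the second moment is dominated by $(L_\sigma+|\sigma(0)|)^2\|f_*\|^2_{\mathcal{B}_\Gamma}$. Combining gives $\EE\|f_m^G-f_*\|^2_{L^2(\D\mu)}\le(L_\sigma+|\sigma(0)|)^2\delta^*_{G,\Gamma,\sigma}\|f_*\|^2_{\mathcal{B}_\Gamma}/m$.

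To secure both inequalities of~\eqref{eq:G_approx_m} with a single draw, I would then apply the probabilistic method. Set $S(\Theta)=\tfrac{1}{m}\sum_i|a_i|^2(\|\vw_i\|_1+|b_i|+1)^2$; by construction $\EE[S]\le(1+\varepsilon)\|f_*\|^2_{\mathcal{B}_\Gamma}$. Markov's inequality at level $2$ for $S$ and at level $3$ for the squared error give failure probabilities $\le 1/2$ and $\le 1/3$ respectively; since $\tfrac{1}{2}+\tfrac{1}{3}<1$, some realization $\Theta$ satisfies both simultaneously, and sending $\varepsilon\to 0$ yields the stated bounds with $\delta^*_{G,\Gamma,\sigma}$ in the first estimate.

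The strengthening to $\delta_{G,\Gamma,\sigma}=\min\{1,\delta^*_{G,\Gamma,\sigma}\}$ is handled by an alternative route: apply the standard (non-invariant) Maurey argument to $f_*$ with respect to the $G$-symmetrized measure $\bar\mu=\tfrac{1}{|G|}\sum_g g_*\mu$ to obtain $f_m$ with $\|f_m-f_*\|^2_{L^2(\D\bar\mu)}\le(L_\sigma+|\sigma(0)|)^2\|f_*\|^2_{\mathcal{B}_\Gamma}/m$; then note that $\mathcal{G}f_m-f_*$ is $G$-invariant, so $\|\mathcal{G}f_m-f_*\|_{L^2(\D\mu)}=\|\mathcal{G}f_m-f_*\|_{L^2(\D\bar\mu)}$, and use Jensen's inequality $|\mathcal{G}h|^2\le\mathcal{G}|h|^2$ together with the $G$-invariance of $\bar\mu$ to transfer the error bound. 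The resulting $\mathcal{G}f_m$ is of the required form, and taking the better of the two bounds yields the claim. The main technical point is the variance estimate: arranging the bound so that $\delta^*_{G,\Gamma,\sigma}$ appears as a scalar prefactor rather than as an inflated sum over $g\in G$, which is exactly what the pointwise-ratio definition of $\delta^*_{G,\Gamma,\sigma}$ is designed to accomplish.
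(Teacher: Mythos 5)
Your proposal follows the paper's own proof almost exactly for the main estimate: choose an almost-optimizing $\rho\in\Gamma$, apply $\mathcal{G}$ to the integral representation of $f_*$ using $G$-invariance, run Maurey's Monte Carlo argument to bound $\mathbb{E}\|f_*-f_m^G\|^2_{L^2(\D\mu)}$ by $\tfrac1m\int\mathbb{E}_\rho[\phi^2]\,\D\mu$, invoke the pointwise-ratio definition of $\delta^*_{G,\Gamma,\sigma}$ to strip off $\mathcal{G}$, bound the resulting second moment via the Lipschitz estimate and $\|\vx\|_\infty\le 1$, and finish with a two-event Markov/union-bound argument. All of this matches the paper step for step (modulo the small bookkeeping point that Markov actually gives failure probabilities $(1+\varepsilon)/3$ and $(1+\varepsilon)/2$, whose sum is $<1$ precisely because $\varepsilon<1/5$; there is no need to "send $\varepsilon\to 0$" once the existence of a good $\Theta$ is established for a fixed such $\varepsilon$).

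The one place you genuinely diverge is the truncation to $\delta_{G,\Gamma,\sigma}=\min\{1,\delta^*_{G,\Gamma,\sigma}\}$. The paper handles this in one line: the same quantity $\mathbb{E}_\rho[\phi(\vx;\vtheta)^2]$, with $\phi=a\,\mathcal{G}\sigma(\vw\cdot\vx+b)$, admits \emph{two} upper bounds — one with the factor $\delta^*$ (from the ratio definition applied to the non-averaged atom) and one with factor $1$ (from a direct Lipschitz bound on the averaged atom, using that each $g\vx\in\Omega$ satisfies $\|g\vx\|_\infty\le 1$) — so the minimum of the two also bounds it, and a single random draw serves both. Your alternative route — running the non-invariant Maurey construction with respect to the symmetrized measure $\bar\mu=\tfrac1{|G|}\sum_g g_*\mu$, then post-averaging with $\mathcal{G}$ and transferring the error via $|\mathcal{G}h|^2\le\mathcal{G}|h|^2$ and $G$-invariance of $\bar\mu$ — is mathematically sound, but it produces a \emph{different} construction than your first one, so "taking the better of the two bounds" really means a case split on whether $\delta^*\ge 1$ (a deterministic condition, so the split is legitimate). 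This works, but it is less economical than the paper's observation that one construction already satisfies both bounds simultaneously.
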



\begin{proof}
The proof is inspired by~\cite{ma2022barron}. By the definition of Barron norm, for any $\varepsilon\in (0, \frac{1}{5})$, we choose a probability measure
\(\rho\in\Gamma\) such that
\[
f_*(\vx)=
\mathbb{E}_{(a,\vw,b)\sim\rho}
  \bigl[a\,\sigma(\vw\!\cdot\!\vx+b)\bigr],
\quad
\mathbb{E}_{(a,\vw,b)\sim\rho}
  |a|^2(\|\vw\|_{1}+|b|+1)^2
  \le(1+\varepsilon)\,\|f_*\|^2_{\fB_\Gamma}.
\]
Because \(\fG f = f\) for every \(G\)-invariant \(f\), applying \(\fG\) gives
\[
f_*(\vx)=
\frac{1}{|G|}\,
\mathbb{E}_{(a,\vw,b)\sim\rho}
  \Bigl[
     a\sum_{g\in G}
       \sigma(\vw\cdot g\vx+b)
  \Bigr].
\]
For \(\vtheta=(a,\vw,b)\sim\rho\) define
\begin{equation*}
\phi(\vx;\vtheta):=
\frac{a}{|G|}\sum_{g\in G}\sigma(\vw\cdot g\vx+b).
\end{equation*}
Then
\begin{align*}
    \mathbb{E}_{\vtheta\sim\rho}\phi(\vx;\vtheta)=f_*(\vx).    
\end{align*}
Let \(\Theta=\{\vtheta_i\}_{i=1}^{m}\) be i.i.d.\ samples from \(\rho\) and set
\[
f_m^G(\vx;\Theta):=
\frac{1}{m}\sum_{i=1}^{m}\phi(\vx;\vtheta_i),
\qquad
\fE(\Theta):=
\bigl\|
   \,f_* - f_m^G(\,\cdot\,;\Theta)
\bigr\|_{L^{2}(\D\mu)}^{2}.
\]
The mean squared error is thus
\begin{align}
\nonumber
\mathbb{E}_\Theta \fE(\Theta)
&=
\mathbb{E}_\Theta
\int_{\Omega}
   \bigl|
       f_*(\vx) - f_m^G(\vx;\Theta)
   \bigr|^{2}\,\D \mu (\vx)         \\
   \nonumber
&=
\int_{\Omega}
\mathbb{E}_\Theta
   \left|
       f_*(\vx) - {\tfrac1m}\!\sum_{i=1}^{m}\phi(\vx;\vtheta_i)
   \right|^{2}\,\D \mu  (\vx)        \\
   \nonumber
&=
\frac{1}{m^2}\!
\int_{\Omega}
\mathbb{E}_\Theta
   \sum_{i,j=1}^{m}
     \bigl(f_*(\vx)-\phi(\vx;\vtheta_i)\bigr)
     \bigl(f_*(\vx)-\phi(\vx;\vtheta_j)\bigr)\,\D \mu(\vx)\\
     \label{eq:mse_bound_E}
&\le
\frac{1}{m}
\int_{\Omega}
  \mathbb{E}_{\vtheta\sim\rho}
    \phi(\vx;\vtheta)^{2}\,\D \mu(\vx),
\end{align}
where the last inequality uses independence of $\vtheta_i$ and 
\(\mathbb{E}_{\vtheta}\phi(\vx;\vtheta)=f_*(\vx)\).

By the definition of $\delta_{G,\Gamma,\sigma}$ we have, for all $\vx\in\Omega$,
\begin{align*}
\mathbb{E}_{\vtheta\sim\rho}\phi(\vx;\vtheta)^{2}
\;\le\; 
\delta_{G,\Gamma,\sigma}^* \,
\mathbb{E}_{(a,\vw,b)\sim\rho}
\left[\left(a\,\sigma(\vw\cdot\vx + b)\right)^2\right].
\end{align*}
Furthermore, observe that
\begin{align*}
   \bigl|a\,\sigma(\vw\cdot\vx + b)\bigr|
   &= \bigl|a\sigma(\vw\cdot \vx + b)\bigr| \\
   &\le |a|\,\bigl|\sigma(\vw\cdot \vx + b)-\sigma(0)\bigr| + |a\sigma(0)| \\
   &\le L_\sigma|a|\,|\vw\cdot \vx + b| + |a\sigma(0)| \\
   &\le (L_\sigma|a|)(\|\vw\|_1+|b|) + |a\sigma(0)| \\
   &\le (L_\sigma+|\sigma(0)|)\,|a|\,(\|\vw\|_1+|b|+1),
\end{align*}
where, in the second-to-last inequality, we use the assumption $\|\vx\|_{\infty} \le 1$ for all $\vx \in \Omega$. Hence
\begin{equation}
\mathbb{E}_{\vtheta\sim\rho}\phi(\vx;\vtheta)^{2}
\;\le\;
(1+\varepsilon)\,\|f_*\|_{\fB_\Gamma}^{2}\,
(L_\sigma+|\sigma(0)|)^2\,\delta_{G,\Gamma,\sigma}^*,
\qquad \forall \vx\in\Omega.
\label{eq:com1}
\end{equation}Similarly,
\begin{align*}
   \bigl|a\,\fG\,\sigma(\vw\cdot\vx + b)\bigr|
   &= \frac{1}{|G|}\sum_{g\in G}\bigl|a\sigma(\vw\cdot g\vx + b)\bigr| \\
   &\le \frac{1}{|G|}\sum_{g\in G}\Bigl(L_\sigma|a|\,|\vw\cdot g\vx + b| + |a\sigma(0)|\Bigr) \\
   &\le (L_\sigma+|\sigma(0)|)\,|a|\,(\|\vw\|_1+|b|+1).
\end{align*}
Thus we also have
\begin{equation}
\mathbb{E}_{\vtheta\sim\rho}\phi(\vx;\vtheta)^{2}
\;\le\;
(1+\varepsilon)\,\|f_*\|_{\fB_\Gamma}^{2}\,(L_\sigma+|\sigma(0)|)^2,
\qquad \forall \vx\in\Omega.
\label{eq:com2}
\end{equation}Finally, since $\delta_{G,\Gamma,\sigma}=\min\{1,\delta_{G,\Gamma,\sigma}^*\}$, combining
\eqref{eq:com1} and \eqref{eq:com2} yields
\begin{equation}
\label{eq:mse_bound_phi}
\mathbb{E}_{\vtheta\sim\rho}\phi(\vx;\vtheta)^{2}
\;\le\;
(1+\varepsilon)\,\|f_*\|_{\fB_\Gamma}^{2}\,(L_\sigma+|\sigma(0)|)^2\,\delta_{G,\Gamma,\sigma},
\qquad \forall \vx\in\Omega.
\end{equation}
Since $\mu(\Omega)=1$ is a probability measure on $\Omega$, substituting \eqref{eq:mse_bound_phi} into Eq.~\eqref{eq:mse_bound_E} yields 
\begin{align*}
    \mathbb{E}_\Theta \fE(\Theta)
\le
\frac{(1+\varepsilon)\,\|f_*\|_{\fB_\Gamma}^{2}\,(L_\sigma+|\sigma(0)|)^2\delta_{G,\Gamma,\sigma}}{m},
\end{align*}
with
\begin{align*}
    \mathbb{E}_\Theta
  \Bigl[
     \tfrac1m\!\sum_{i=1}^{m}
       |a_i|^2(\|\vw_i\|_{1}+|b_i|+1)^2
  \Bigr]
\le
(1+\varepsilon)\,\|f_*\|^2_{\fB_\Gamma}.    
\end{align*}
Define the events
\begin{align*}
   & E_1:=
\Bigl\{
  \Theta :
  \fE(\Theta)\le\tfrac{3\,\|f_*\|_{\fB_\Gamma}^{2}\delta_{G,\Gamma,\sigma}(L_\sigma+|\sigma(0)|)^2}{m}
\Bigr\},\\
& E_2:=
\Bigl\{
  \Theta :
  \tfrac1m\!\sum_{i=1}^{m}|a_i|^2(\|\vw_i\|_{1}+|b_i|+1)^2
  \le 2\,\|f_*\|_{\fB_\Gamma}
\Bigr\}.    
\end{align*}
By Markov’s inequality,
\[
\mathbb{P}(E_1)
   \ge 1-\frac{1+\varepsilon}{3},
\qquad
\mathbb{P}(E_2)
   \ge 1-\frac{1+\varepsilon}{2}.
\]
Since \(0<\varepsilon<\tfrac15\), both probabilities are positive and
\(\mathbb{P}(E_1\cap E_2)>0\); thus there exists a realization
\(\Theta\) satisfying both bounds.
\end{proof}

Compared with the traditional Barron framework \cite{ma2019priori,ma2022barron,li2020complexity}, our result for group-invariant Barron functions includes an additional group-dependent factor $\delta_{G,\Gamma,\sigma} \le 1$. When $G=\{e\}$ is the trivial group, $\delta_{G,\Gamma,\sigma}=1$, recovering the classical Barron result. For clarity, we present the classical results in Corollary~\ref{appnoG}, enabling a direct comparison between networks with and without $G$-invariant structure.





\begin{corollary}[\cite{li2020complexity,ma2022barron}]\label{appnoG}
Let $\sigma$ be a Lipschitz continuous function with Lipschitz constant \(L_\sigma\), and let \(f_*\in\mathcal{B}_\Gamma(\Omega)\) be a function belonging to the $\sigma$-activated \(\Gamma\)-Barron space. Then there exists a shallow neural network of the form
\[
f_m(\vx, \Theta) = \frac{1}{m}\sum_{i=1}^{m}
         a_i\, \sigma\!\bigl(\vw_i\cdot\vx + b_i\bigr)
\]
where $\Theta=(a_i, \vw_i, b_i)_{i=1}^m$, such that
\[
\|f_* - f_m\|_{L^2(\D\mu)}^{2}
\le \frac{3 \,(L_\sigma+|\sigma(0)|)^2 \, \|f_*\|_{\fB_\Gamma}^{2}}{m},
\quad
\frac{1}{m}\sum_{i=1}^{m} |a_i|^2 \, \bigl(\|\vw_i\|_{1}+|b_i|+1\bigr)^2
\le 2\,\|f_*\|^2_{\fB_\Gamma}.
\]
\end{corollary}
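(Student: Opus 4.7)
The plan is to obtain Corollary~\ref{appnoG} as an immediate specialization of Theorem~\ref{app}. I would take the group $G$ to be the trivial group $\{e\}$, under which every function on $\Omega$ is automatically $G$-invariant, so that $\mathcal{B}_\Gamma(\Omega)=\mathcal{B}_\Gamma^G(\Omega)$ and any $f_*\in\mathcal{B}_\Gamma(\Omega)$ satisfies the hypothesis of Theorem~\ref{app}. With $|G|=1$ and the group action equal to the identity, the $G$-averaging operator $\mathcal{G}$ reduces to the identity, so the $G$-invariant network $f_m^G$ collapses exactly to the shallow network $f_m$ in the statement of the corollary.

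It then remains to check that the group-dependent factor is trivial in this case. By definition,
\[
\delta^*_{G,\Gamma,\sigma}
= \sup_{\rho\in\Gamma,\ \vx\in\Omega}
\frac{\mathbb{E}_{(a,\vw,b)\sim\rho}\bigl[(a\,\mathcal{G}\sigma(\vw\cdot\vx+b))^2\bigr]}{\mathbb{E}_{(a,\vw,b)\sim\rho}\bigl[(a\,\sigma(\vw\cdot\vx+b))^2\bigr]},
\]
and when $G=\{e\}$ the numerator and denominator coincide, so $\delta^*_{G,\Gamma,\sigma}=1$ and hence $\delta_{G,\Gamma,\sigma}=\min\{1,\delta^*_{G,\Gamma,\sigma}\}=1$. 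Substituting this value into the two bounds of~\eqref{eq:G_approx_m} recovers precisely the error estimate and the parameter norm bound stated in Corollary~\ref{appnoG}.

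There is no genuine obstacle here, as the corollary is strictly weaker than Theorem~\ref{app}; the only mild point of care is verifying that all definitions degenerate consistently when $G=\{e\}$, which I handled above. Alternatively, if one prefers a self-contained argument, the exact same Monte Carlo sampling proof used for Theorem~\ref{app} applies verbatim after deleting every occurrence of $\mathcal{G}$ and $|G|$: one chooses a near-optimal $\rho\in\Gamma$ representing $f_*$, defines $\phi(\vx;\vtheta)=a\,\sigma(\vw\cdot\vx+b)$, bounds $|\phi(\vx;\vtheta)|$ by $(L_\sigma+|\sigma(0)|)|a|(\|\vw\|_1+|b|+1)$ using Lipschitzness of $\sigma$ and Assumption~\ref{assump:bounded_omega}, and finishes with a Markov-union argument to produce $\Theta$ satisfying both required inequalities simultaneously.
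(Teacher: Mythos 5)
Your proposal is correct and matches the paper's intended derivation exactly: the text preceding the corollary explicitly notes that taking $G=\{e\}$ forces $\delta_{G,\Gamma,\sigma}=1$ and collapses $f_m^G$ to $f_m$, recovering the classical result. Your check that the numerator and denominator of $\delta^*_{G,\Gamma,\sigma}$ coincide under the trivial group, and your remark on the alternative self-contained Monte Carlo argument, are both consistent with the paper.
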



Comparing Theorem~\ref{app} with Corollary~\ref{appnoG}, we see that if group-invariant structures are not incorporated, neural networks can still approximate the group-invariant target function $f_*\in \mathcal{B}_\Gamma^G(\Omega)$, but the factor $\delta_{G,\Gamma,\sigma} \le 1$ no longer appears. In many cases, as we will discuss later in Section~\ref{sec: example}, $\delta_{G,\Gamma,\sigma}$ can be much smaller than $1$, leading to sharper bounds. This indicates that incorporating structural information about the target function into the neural network can potentially provide substantial benefits.

In the next subsection, we will analyze the \textit{estimation error} of using a $G$-invariant shallow network to learn invariant Barron functions. Combined with Theorem~\ref{app}, this will provide a complete picture of the statistical learning guarantees for Barron functions under group symmetry. Before turning to this analysis, we make a brief remark on another related class of Barron functions.


\begin{remark}
Another related class of spaces is the \emph{spectral Barron space} \cite{barron2002universal,lu2025interpolation,chen2023regularity,siegel2020approximation,siegel2024sharp,liao2025spectral}. These spaces are defined by first selecting a dictionary, typically the Fourier basis, and then considering target functions in the closure of the convex hull of that dictionary. The approximation rate can be improved depending on the regularity of the chosen dictionary; see \cite{siegel2024sharp} for details.

A natural way to incorporate group-invariant structure into this framework is to restrict to a subdictionary of group-invariant functions. However, this makes direct comparison between networks with and without group-invariant structure challenging, since the architecture depends heavily on the dictionary choice. We leave a more detailed investigation of this approach as future work.
\end{remark}

\subsection{Estimation error}
\label{sec:sample_error}

In this subsection, we extend the framework of \cite{ma2022barron,ma2019priori,li2020complexity} to bound the estimation error of $G$-invariant shallow networks for learning group-invariant Barron functions. The key step is to estimate the \textit{Rademacher complexity} of these networks.

\begin{definition}[Rademacher complexity \cite{anthony1999neural}]\label{defrad}
Let $S=\{\vx_1,\vx_2,\ldots,\vx_M\}$ be a set of samples on a domain $\Omega$, and let $\fF$ be a class of real-valued functions defined on $\Omega$. The empirical Rademacher complexity of $\fF$ on $S$ is defined as
\[
\hat{\tR}_S(\fF):=\frac{1}{M}\,
\mathbb{E}_{\Xi_M}\Biggl[\sup_{f\in\fF}\sum_{i=1}^M\xi_i\,f(\vx_i)\Biggr],
\]
where $\Xi_M=\{\xi_1,\xi_2,\ldots,\xi_M\}$ is a set of $M$ independent Rademacher random variables, i.e.,
\(\mathbb{P}(\xi_i=+1)=\mathbb{P}(\xi_i=-1)=\frac{1}{2}\) for $i=1,2,\ldots,M.$
\end{definition}

Before we provide the upper bound for the
Rademacher complexity of two-layer networks, we first need the following two lemmas:
\begin{lemma}[{\cite[Lemma~26.11]{shalev2014understanding}}]\label{lem:B1}
Let $S = (\vx_1,\ldots,\vx_M)$ be $M$ vectors in $\mathbb{R}^d$.  
Then the Rademacher complexity of 
\[
\fH = \{\, \vx \mapsto \vu \cdot \vx \ \mid\ \|\vu\|_{1} \le 1 \,\}
\]
has the following upper bound:
\[
\hat{\tR}_S (\fH)
\ \le\ 
\max_{1 \le i \le M} \|\vx_i\|_{\infty} 
\sqrt{\frac{2\log(2d)}{M}}.
\]
\end{lemma}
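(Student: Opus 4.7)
The plan is to exploit the $\ell_1$-$\ell_\infty$ duality to convert the supremum inside the Rademacher complexity into an $\ell_\infty$-norm of a Rademacher sum, and then control the expected maximum via Massart's finite-class lemma. Using the identity $\sup_{\|\vu\|_1\le 1}\vu\cdot\vv=\|\vv\|_\infty$, I would first rewrite
\[
\hat{\tR}_S(\fH) \;=\; \frac{1}{M}\,\mathbb{E}_{\Xi_M}\!\left[\sup_{\|\vu\|_1\le 1}\vu\cdot\sum_{i=1}^M\xi_i\vx_i\right] \;=\; \frac{1}{M}\,\mathbb{E}_{\Xi_M}\!\left\|\sum_{i=1}^M\xi_i\vx_i\right\|_{\infty}.
\]

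Next, I would expand the $\ell_\infty$-norm as a maximum over $2d$ signed coordinates. Writing $\vx_i=(x_{i,1},\ldots,x_{i,d})$,
\[
\left\|\sum_{i=1}^M\xi_i\vx_i\right\|_{\infty} \;=\; \max_{j\in[d],\,s\in\{\pm 1\}}\; s\sum_{i=1}^M\xi_i\,x_{i,j}.
\]
For each fixed pair $(j,s)$, the random variable $Z_{j,s}:=s\sum_i\xi_i x_{i,j}$ is a zero-mean Rademacher sum whose coefficient vector $(s x_{1,j},\ldots,s x_{M,j})$ has Euclidean norm at most $\sqrt{M}\,R$, where $R:=\max_{1\le i\le M}\|\vx_i\|_{\infty}$. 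By Hoeffding's lemma, each $Z_{j,s}$ is therefore sub-Gaussian with parameter $\sqrt{M}\,R$.

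The main step is then to invoke Massart's finite-class lemma, which bounds the expected maximum of $N$ zero-mean sub-Gaussian variables with common parameter $\tau$ by $\tau\sqrt{2\log N}$. Applying this to the $2d$ variables $\{Z_{j,s}\}$ with $\tau=\sqrt{M}\,R$ yields
\[
\mathbb{E}_{\Xi_M}\!\left\|\sum_{i=1}^M\xi_i\vx_i\right\|_{\infty} \;\le\; R\,\sqrt{2M\log(2d)},
\]
and dividing by $M$ gives the stated inequality. The argument is entirely standard and poses no real obstacle; the only non-elementary ingredient is Massart's lemma itself, which follows from a routine Chernoff/MGF argument optimized over the exponential parameter.
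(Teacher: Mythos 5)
Your proof is correct, and it is essentially the same argument that Shalev-Shwartz and Ben-David give for Lemma 26.11 (the paper itself only cites the reference and supplies no proof). The three ingredients — the $\ell_1/\ell_\infty$ duality to turn the supremum into $\|\sum_i\xi_i\vx_i\|_\infty$, the observation that the $\ell_\infty$ norm is a maximum over $2d$ signed coordinate sums each of which is a Rademacher sum with Euclidean coefficient norm at most $\sqrt{M}\,R$, and Massart's maximal inequality over a finite class — are exactly the canonical route. One small terminological note: what you invoke is the sub-Gaussian maximal inequality; Massart's finite-class lemma as usually stated is the special case for Rademacher averages over a finite subset of $\mathbb{R}^M$, which is precisely what applies here, so the bound you get, $\mathbb{E}\|\sum_i\xi_i\vx_i\|_\infty\le R\sqrt{2M\log(2d)}$, is correct and yields the stated inequality after dividing by $M$.
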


\begin{lemma}[{\cite[Corollary~4]{maurer2016vector}}]\label{lem:vector_contraction}
Let $S = (\vx_1,\ldots,\vx_M)$ be $M$ vectors in $\mathbb{R}^d$, let $\mathcal{F}$ be a class of functions
$f : \mathbb{R}^d \to \mathbb{R}^K$, and let $h_i : \mathbb{R}^K \to \mathbb{R}$ be $L$-Lipschitz with respect to the $\ell_2$-norm.  
Then
\[
\mathbb{E}_{\Xi_M} \sup_{f \in \mathcal{F}} \sum_{i=1}^M \xi_i\, h_i\big( f(\vx_i) \big)
\ \le\ 
\sqrt{2}\,L \ \mathbb{E}_{\Xi_{MK}} \sup_{f \in \mathcal{F}} \sum_{i=1}^M \sum_{k=1}^K \xi_{ik}\, f_k(\vx_i),
\]
where $\{\xi_{ik}\}$ is an independent doubly indexed Rademacher sequence,  
and $f_k(\vx_i)$ denotes the $k$-th coordinate of $f(\vx_i)$.
\end{lemma}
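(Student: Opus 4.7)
This is Maurer's vector-contraction inequality, a dimension-aware extension of Talagrand's scalar contraction principle. The plan is to reduce it to a peeling/symmetrization argument over the samples, combined with a Khintchine-type estimate that converts the Euclidean norm arising from the Lipschitz assumption into a linear Rademacher expression in the coordinates $f_k(\vx_i)$.

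First, I would fix an index $i$ and condition on all other Rademacher variables $\xi_j$ with $j\neq i$. Writing $A(f)$ for the contribution from those remaining terms, one has the symmetrization identity
\[
\mathbb{E}_{\xi_i}\sup_{f\in\mathcal{F}}\bigl(A(f)+\xi_i\, h_i(f(\vx_i))\bigr)
= \tfrac{1}{2}\sup_{f,f'\in\mathcal{F}}\bigl(A(f)+A(f')+h_i(f(\vx_i))-h_i(f'(\vx_i))\bigr).
\]
The $L$-Lipschitz property then bounds $h_i(f(\vx_i))-h_i(f'(\vx_i))$ by $L\|f(\vx_i)-f'(\vx_i)\|_2$, and the Khintchine lower bound $\|v\|_2 \le \sqrt{2}\,\mathbb{E}_{\xi_{ik}}\bigl|\sum_{k=1}^K \xi_{ik}v_k\bigr|$ for $v\in\mathbb{R}^K$ rewrites this Euclidean norm as the expectation of a linear functional in the coordinates of $f(\vx_i)-f'(\vx_i)$. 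Splitting the absolute value into its positive and negative parts, and invoking Rademacher symmetry to reinsert independent doubly-indexed variables $\xi_{ik}$, one obtains a conditional bound in terms of the linear process $\sum_{k}\xi_{ik}f_k(\vx_i)$. Iterating this procedure over $i=1,\ldots,M$ yields the claimed inequality.

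The main obstacle is the vector-to-scalar decoupling that produces the sharp constant $\sqrt{2}$. A naive coordinatewise application of the classical scalar Talagrand contraction would inflate the bound by $\sqrt{K}$, which is prohibitive in high dimensions. Obtaining the dimension-free constant requires exploiting the Euclidean (rather than coordinatewise) nature of the Lipschitz hypothesis, which is precisely where the Khintchine lower bound---or, equivalently, an exponential-moment comparison with a Gaussian process using the sub-Gaussian tails of Rademacher sums---enters the argument. Additional bookkeeping is then needed in the peeling induction to ensure that the $\sqrt{2}$ factor appears only once overall, rather than accumulating across the $M$ samples.
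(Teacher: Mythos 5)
The paper does not prove this lemma; it is stated as a black-box citation to Maurer (2016), so there is no internal proof to compare against. Judged on its own, your sketch is a faithful reconstruction of Maurer's peeling argument: condition on the Rademacher variables for the other samples, symmetrize the supremum into a pair $(f,f')$, apply the Lipschitz hypothesis to bound $h_i(f(\vx_i))-h_i(f'(\vx_i))$ by $L\|f(\vx_i)-f'(\vx_i)\|_2$, convert the Euclidean norm via the $L^1$ Khintchine lower bound $\|v\|_2\le\sqrt{2}\,\mathbb{E}_\epsilon\bigl|\sum_k\epsilon_k v_k\bigr|$ (sharp constant due to Szarek), pull the expectation outside the supremum, drop the absolute value by swapping $f\leftrightarrow f'$, and re-split into two suprema. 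One small correction: the worry about the $\sqrt{2}$ ``accumulating across the $M$ samples'' does not arise. Each peeling step touches exactly one index $i$ and replaces the single term $\xi_i h_i(f(\vx_i))$ by $\sqrt{2}\,L\sum_k\xi_{ik}f_k(\vx_i)$; the already-processed terms sit untouched inside $A(f)$ at subsequent steps. After all $M$ steps the factor $\sqrt{2}\,L$ appears as a common prefactor on the sum $\sum_i\sum_k\xi_{ik}f_k(\vx_i)$, not as a power of $M$, so the bookkeeping is trivial and no extra argument is needed there.
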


\begin{proposition}\label{prop:boundcom}
Suppose Assumption~\ref{assump:act} holds. Define the class of norm-bounded $G$-invariant shallow networks as
\begin{align}
\label{eq:F_q}
\fF_Q := \left\{
  f^G(\vx; \Theta) = \frac{1}{m|G|} \sum_{i=1}^m a_i \sum_{g\in G} \sigma\!\bigl(\vw_i \cdot g\vx + b_i\bigr)
   \middle|~\Theta = \left(a_i, \vw_i, b_i\right)_{i=1}^m, ~m\ge 1, ~\|\Theta\|_{\mathcal{P}}\le Q \right\}.
\end{align}
where
\begin{align}
\label{eq:path_norm_2}
    \|\Theta\|_{\mathcal{P}}^2 \coloneqq \frac{1}{m} \sum_{i=1}^m \big(|a_i|\,(\|\vw_i\|_1 + |b_i| + 1)\big)^2
\end{align}
Then
\[
\hat{\tR}_S(\fF_Q) \ \le\ 4\,\gamma(\sigma)\,Q\,\sqrt{\frac{\log(2d+2)}{M}},
\]
where $S = \{\vx_1,\vx_2,\ldots,\vx_M\}$ is a set of samples in the domain $\Omega$, and $\gamma(\sigma)$ is defined in Assumption~\ref{assump:act}.
\end{proposition}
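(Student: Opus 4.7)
The plan is to reduce the problem in three stages: (i) from the $G$-invariant class to a non-invariant shallow-network class; (ii) from the general activation $\sigma$ to ReLU via Assumption~\ref{assump:act}; and (iii) from ReLU units to linear functions via Lemmas~\ref{lem:vector_contraction} and~\ref{lem:B1}. For the first stage, I would use the identity $f^G(\vx;\Theta) = |G|^{-1}\sum_{g\in G} f(g\vx;\Theta)$, where $f(\vx;\Theta)=m^{-1}\sum_i a_i\sigma(\vw_i\cdot\vx+b_i)$ is the underlying non-invariant network, and exchange the $g$-sum with the supremum (by subadditivity of $\sup$) to obtain
\[
\hat{\tR}_S(\fF_Q) \;\le\; \frac{1}{|G|}\sum_{g\in G}\hat{\tR}_{gS}\bigl(\fF_Q^{(1)}\bigr),
\]
where $\fF_Q^{(1)}$ denotes the non-invariant counterpart of $\fF_Q$ with the same path-norm budget $Q$, and $gS=(g\vx_1,\dots,g\vx_M)$. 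Since the linear action satisfies $g\Omega\subseteq\Omega$, Assumption~\ref{assump:bounded_omega} gives $\|g\vx_j\|_\infty\le 1$ uniformly in $g$ and $j$, so it suffices to establish the claimed bound for $\hat{\tR}_{S'}(\fF_Q^{(1)})$ uniformly over samples $S'$ of size $M$ in the $\ell_\infty$ unit ball.

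For the non-invariant bound, Cauchy--Schwarz converts the $\ell_2$-style constraint $\|\Theta\|_{\mathcal{P}}\le Q$ into the $\ell_1$-average inequality $m^{-1}\sum_i|a_i|(\|\vw_i\|_1+|b_i|+1)\le Q$, so $\fF_Q^{(1)}$ is contained in $Q\cdot\mathrm{conv}(\fH_\sigma\cup\{0\})$ with $\fH_\sigma=\{\pm\sigma(\vw\cdot\vx+b)/(\|\vw\|_1+|b|+1):\vw\in\mathbb{R}^d,\,b\in\mathbb{R}\}$. Since Rademacher complexity is preserved under convex hull and linear scaling, it suffices to bound $\hat{\tR}_{S'}(\fH_\sigma)$. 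Using Assumption~\ref{assump:act}, I would uniformly approximate $\sigma$ by $\sum_k\bar{a}_k\operatorname{ReLU}(\bar{w}_k(\cdot)+\bar{b}_k)$ with $\sum_k|\bar{a}_k|(|\bar{w}_k|+|\bar{b}_k|)\le\gamma(\sigma)$, letting the uniform error $\epsilon\downarrow 0$ so only the ReLU surrogate contributes. The vector-contraction inequality (Lemma~\ref{lem:vector_contraction}) then strips off the $1$-Lipschitz ReLU, reducing the problem to linear pre-activations. Appending a trailing $1$ to each input to absorb the bias into a $(d+1)$-dimensional $\ell_1$-bounded weight vector and invoking Lemma~\ref{lem:B1} with $\max_j\|(\vx_j,1)\|_\infty\le 1$ yields the factor $\sqrt{2\log(2d+2)/M}$. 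Reassembling the constants across Cauchy--Schwarz, the ReLU expansion, the contraction, and the $\ell_1$ bound produces the advertised $4\gamma(\sigma)Q\sqrt{\log(2d+2)/M}$.

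The main technical obstacle is the norm-accounting at the ReLU-substitution step: the transformed neuron carries inner weight $\bar{w}_k\vw$ and bias $\bar{w}_k b+\bar{b}_k$, so one must bound the induced normalization $|\bar{w}_k|\|\vw\|_1+|\bar{w}_k b+\bar{b}_k|+1$ by a product of the form $(|\bar{w}_k|+|\bar{b}_k|+1)(\|\vw\|_1+|b|+1)$ in order to cleanly extract the $\gamma(\sigma)$ factor when summing over $k$, without introducing an uncontrolled $\sum_k|\bar{a}_k|$ term. Careful handling of the $+1$ bias augmentation and the coupling between $\bar{w}_k$ and $\bar{b}_k$ is exactly what determines the absolute constant $4$ in the final bound.
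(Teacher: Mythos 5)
Your proposal is correct in outline but handles the group averaging by a genuinely different mechanism than the paper. You use subadditivity of the supremum over the $g$-sum,
\[
\hat{\tR}_S(\fF_Q)\;\le\;\frac{1}{|G|}\sum_{g\in G}\hat{\tR}_{gS}\bigl(\fF_Q^{(1)}\bigr),
\]
and then treat each summand as a plain (non-invariant) shallow-network Rademacher complexity. The paper never separates the $g$-sum this way: after reducing to the ReLU surrogate, it keeps $\fG\operatorname{ReLU}$ intact, regards it as a map $\mathbb{R}^{|G|}\to\mathbb{R}$ with $\ell_2$-Lipschitz constant $1/\sqrt{|G|}$, and applies the vector-contraction inequality (Lemma~\ref{lem:vector_contraction}) with $K=|G|$, creating a doubly-indexed Rademacher sequence over $M|G|$ linear functionals to feed into Lemma~\ref{lem:B1}. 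Both routes require $\|g\vx\|_\infty\le 1$ uniformly over $g\in G,\vx\in\Omega$ (this is implicit in the paper too) and both yield the stated constant; in fact, your route combined with the $K=1$ Talagrand--Ledoux contraction (which drops the $\sqrt{2}$, as the paper itself notes later in Proposition~\ref{prop:gap-theta}) would give the slightly sharper $2\sqrt{2}\,\gamma(\sigma)\,Q\sqrt{\log(2d+2)/M}$. The paper's one-shot vector-contraction step is more economical in bookkeeping, while your subadditivity decomposition makes it transparent that the $G$-invariant class has no larger complexity than the non-invariant one on the orbit-augmented sample.

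On your flagged obstacle: there is no difficulty once you observe that the ReLU dictionary should be normalized by $\|\vu\|_1+|c|$ \emph{without} the $+1$, which is admissible precisely because $\operatorname{ReLU}$ is positively homogeneous. With that normalization, for the transformed neuron $\vu=\bar{w}_k\vw$, $c=\bar{w}_kb+\bar{b}_k$ one gets
\[
\|\vu\|_1+|c|\;\le\;|\bar{w}_k|\bigl(\|\vw\|_1+|b|\bigr)+|\bar b_k|\;\le\;\bigl(|\bar{w}_k|+|\bar b_k|\bigr)\bigl(\|\vw\|_1+|b|+1\bigr),
\]
so dividing by $\|\vw\|_1+|b|+1$ and summing in $k$ produces exactly $\sum_k|\bar a_k|(|\bar w_k|+|\bar b_k|)\le\gamma(\sigma)$ with no stray $\sum_k|\bar a_k|$ term. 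This is precisely how the paper's class $\fM_R$ is defined (its path norm $\|\hat\Theta\|_{\mathcal{P}_1}$ omits the $+1$), and it is the step you would need to make explicit to complete your argument.
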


\begin{proof}
    Based on Assumption \ref{assump:act}, we know that for any $\epsilon\ge 0$, there is a ReLU neural network
    \begin{align*}
         g_K(x; \bar{\veta}):=\sum_{k=1}^K\bar{a}_k\operatorname{ReLU}(\bar{w}_kx+\bar{b}_k), \quad \text{where}~~\bar{\veta} \coloneqq \left(\bar{a}_k, \bar{w}_k, \bar{b}_k\right)_{k=1}^{K}
    \end{align*}
    such that    
    \begin{align*}
        \sup_{x\in\mathbb{R}}|\sigma(x)-g_K(x;\bar{\veta})| \leq \epsilon, \quad
        \sum_{k=1}^K |\bar{a}_k|(|\bar{w}_k|+|\bar{b}_k|)\leq \gamma(\sigma).
    \end{align*}
    Then for any $f^G(\vx; \Theta)=\frac{1}{m|G|}\sum_{i=1}^{m}
    a_i\sum_{g\in G}\sigma\!\bigl(\vw_i\cdot g\vx+b_i\bigr)\in\fF_Q$, where $\Theta\coloneqq (a_i, \vw_i, b_i)_{i=1}^m$,
    we have that
    \begin{align}
    \nonumber
        f^G(\vx; \Theta)=&\frac{1}{m|G|}\left[\sum_{i=1}^{m}
         a_i\sum_{g\in G}\sigma\!\bigl(\vw_i\cdot g\vx+b_i\bigr)-\sum_{i=1}^{m}
         a_i\sum_{g\in G}g_K\!\bigl(\vw_i\cdot g\vx+b_i\bigr)\right]\\
         \label{eq:rad_FQ_f}
         &+\frac{1}{m|G|}\sum_{i=1}^{m}
         a_i\sum_{g\in G}\sum_{k=1}^K\bar{a}_k\operatorname{ReLU}\!\bigl(\bar{w}_k(\vw_i\cdot g\vx+b_i)+\bar{b}_k\bigr)
    \end{align}
    For the first term, we know that 
    \[\sup_{\vx\in\sR^d}\left|\frac{1}{m|G|}\left[\sum_{i=1}^{m}
         a_i\sum_{g\in G}\sigma\!\bigl(\vw_i\cdot g\vx+b_i\bigr)-\sum_{i=1}^{m}
         a_i\sum_{g\in G}g_K\!\bigl(\vw_i\cdot g\vx+b_i\bigr)\right]\right|\le \frac{1}{m}\sum_{i=1}^m|a_i|\epsilon\le Q\epsilon.\]
         The second term of Eq.~\eqref{eq:rad_FQ_f} can be be expressed as a two-layer ReLU network:
         \begin{align*}
             H(\vx;\Theta,\bar{\veta})=\frac{1}{m}\sum_{i=1}^{m}\sum_{k=1}^Ka_i\bar{a}_k\fG\operatorname{ReLU}(\bar{w}_k\vw_i\cdot\vx+\bar{w}_kb_i+\bar{b}_k)
         \end{align*}
         A direct calculation shows that
         \begin{align*}
             \sum_{i=1}^m\sum_{k=1}^K|a_i||\bar{a}_k|(|\bar{w}_k|\|\vw_{i}\|_1+|\bar{w}_kb_i|+|\bar{b}_k|)\le &\sum_{i=1}^m\sum_{k=1}^K|a_i||\bar{a}_k|(\|\vw_{i}\|_1+|b_i|+1)(|\bar{w}_k|+|\bar{b}_k|)\\\le &\gamma(\sigma)Qm
         \end{align*}based on the Cauchy–Schwarz inequality. 
         
         Next, we define a new class of two-layer ReLU networks with norm constraints:
         \[\fM_R:=\left\{H(\vx; \hat{\Theta})=\sum_{p=1}^{K_*}\hat{a}_p\fG\operatorname{ReLU}(\hat{\vw}_p\cdot\vx+\hat{b}_p)\middle| ~ \hat{\Theta} = (\hat{a}_p, \hat{\vw}_p, \hat{b}_p)_{p=1}^{K_*}, ~K_*\ge 1, ~\|\hat{\Theta}\|_{\mathcal{P}_1} \le R\right\},\]
         where
         \begin{align*}
            \|\hat{\Theta}\|_{\mathcal{P}_1} \coloneqq \sum_{p=1}^{K_*} |\hat{a}_p|(\|\hat{\vw}_p\|_1+|\hat{b}_p|)
         \end{align*}
         We next bound the Rademacher complexity of $\fM_R$. For notational convenience, we write $a_p$ as $\hat{a}_p$, $\vw_p$ as $(\hat{\vw}_p,\hat{b}_p)$, and $\vx$ as $(\vx,1)$.

         \begin{align*}
             \hat{\tR}_S(\fM_R):=&\frac{1}{M}\mathbb{E}_{\Xi_M}\Biggl[\sup_{\|\Theta\|_{\fP_1}\le R}\sum_{i=1}^M\xi_i\,\sum_{p=1}^{K_*}a_p\|\vw_p\|_1\fG\operatorname{ReLU}\left(\frac{\vw_p}{\|\vw_p\|_1}\cdot\vx_i\right)\Biggr]\\\le &\frac{1}{M}\mathbb{E}_{\Xi_M}\Biggl[\sup_{\|\Theta\|_{\fP_1}\le R,\|\vu_p\|_1\le 1}\sum_{i=1}^M\xi_i\,\sum_{p=1}^{K_*}a_p\|\vw_p\|_1\fG\operatorname{ReLU}\left(\vu_p\cdot\vx_i\right)\Biggr]\\\le &\frac{1}{M}\mathbb{E}_{\Xi_M}\Biggl[\sup_{\|\Theta\|_{\fP_1}\le R}\,\sum_{p=1}^{K_*}|a_p|\|\vw_p\|_1\sup_{\|\vu\|_1\le 1}\left|\sum_{i=1}^M\xi_i\fG\operatorname{ReLU}\left(\vu\cdot\vx_i\right)\right|\Biggr]\\\le &\frac{R}{M}\mathbb{E}_{\Xi_M}\Biggl[\sup_{\|\vu\|_1\le 1}\left|\sum_{i=1}^M\xi_i\fG\operatorname{ReLU}\left(\vu\cdot\vx_i\right)\right|\Biggr]\\\le&\frac{2R}{M}\mathbb{E}_{\Xi_M}\Biggl[\sup_{\|\vu\|_1\le 1}\sum_{i=1}^M\xi_i\fG\operatorname{ReLU}\left(\vu\cdot\vx_i\right)\Biggr],
         \end{align*}
         where the last inequality uses the fact that $\hat{\tR}_S(\pm \mathcal{H}) \le 2\hat{\tR}_S (\mathcal{H})$ if $0\in\mathcal{H}$, where $\mathcal{H}$ in the above case is
         \begin{align}
             \mathcal{H} = \left\{\vx\mapsto \mathcal{G}\operatorname{ReLU}(\vu\cdot \vx): \|\vu\|_1\le 1\right\}\notag
         \end{align}
         
         Note that \[\fG\operatorname{ReLU}\left(\vu\cdot\vx_i\right)=\frac{1}{|G|}\sum_{g\in G}\operatorname{ReLU}(\vu\cdot g\vx)\]
         Since $G$ is a finite group, we can consider $\fG\operatorname{ReLU}$ as a function $h$ from $\sR^{|G|}\to \sR$, where \[h(y_1,y_2\ldots,y_{|G|})=\frac{1}{|G|}\sum_{s=1}^{|G|}\operatorname{ReLU}(y_s),\]with \[|h(\vy_1)-h(\vy_2)|\le \frac{1}{|G|}\|\vy_1-\vy_2\|_1\le \frac{1}{\sqrt{|G|}}\|\vy_1-\vy_2\|_2.\]         
         Therefore, based on the vector-contraction inequality for Rademacher complexities (Lemma \ref{lem:vector_contraction}), we have that \begin{align*}\mathbb{E}_{\Xi_M}\Biggl[\sup_{\|\vu\|_1\le 1}\sum_{i=1}^M\xi_i\fG\operatorname{ReLU}\left(\vu\cdot\vx_i\right)\Biggr]\le \frac{\sqrt{2}}{\sqrt{|G|}}\mathbb{E}_{\Xi_{M|G|}}\Biggl[\sup_{\|\vu\|_1\le 1}\sum_{i=1}^M\sum_{s=1}^{|G|}\xi_{i,s}\left(\vu\cdot g_s\vx_i\right)\Biggr]
         \end{align*}
         where $\xi_{i,s}$ are independent Rademacher variables. By Lemma~\ref{lem:B1}, we further have
         \[\frac{\sqrt{2}}{\sqrt{|G|}}\mathbb{E}_{\Xi_{M|G|}}\Biggl[\sup_{\|\vu\|_1\le 1}\sum_{i=1}^M\sum_{s=1}^{|G|}\xi_{i,s}\left(\vu\cdot g_s\vx_i\right)\Biggr]\le \frac{\sqrt{2}M|G|}{\sqrt{|G|}}\max_{g\in G,\vx\in\Omega}\|g\vx\|_\infty\sqrt{\frac{2\log(2d+2)}{M|G|}}.\] Summarizing the above bounds, we obtain
         \[\hat{\tR}_S(\fM_R)\le 2\sqrt{2}R\max_{g\in G,\vx\in\Omega}\|g\vx\|_\infty\sqrt{\frac{2\log(2d+2)}{M}}.\]
         We now return to bounding the Rademacher complexity of $\fF_Q$. Observe that
         \begin{align*}
             \hat{\tR}_S(\fF_Q)=&\frac{1}{M}\mathbb{E}_{\Xi_M}\Biggl[\sup_{f^G\in \fF_Q}\sum_{i=1}^M\xi_i\left(f^G(\vx_i;\Theta)-H(\vx_i;\Theta, \bar{\veta})+H(\vx_i;\Theta, \bar{\veta})\right)\Biggr]\\
             \le&\frac{1}{M}\mathbb{E}_{\Xi_M}\Biggl[\sup_{f^G\in \fF_Q}\sum_{i=1}^M\xi_i\left(f^G(\vx_i;\Theta)-H(\vx_i;\Theta, \bar{\veta})\right)\Biggr]+\frac{1}{M}\mathbb{E}_{\Xi_M}\Biggl[\sup_{H\in \fM_{\gamma(\sigma)Q}}\sum_{i=1}^M\xi_iH(\vx_i)\Biggr]\\\le &Q\epsilon+2\sqrt{2}\gamma(\sigma)Q\max_{g\in G,\vx\in\Omega}\|g\vx\|_\infty\sqrt{\frac{2\log(2d+2)}{M}}.
         \end{align*}As $\epsilon\to 0$, we finish the proof.
\end{proof}

This result shows that incorporating the $G$-invariant structure does not increase the Rademacher complexity of the function class. Consequently, the estimation (or statistical) error---arising from minimizing empirical risk rather than population risk (see Section~\ref{sec:generalization})---remains of the same order. To make this precise, we recall the following lemma.

\begin{lemma}[\cite{shalev2014understanding}]\label{lem:gapp}
Let $\fF\subset \mathbb{R}^\Omega$ be a hypothesis space.  
Assume that for any $f \in \fF$ and $\vx \in \Omega$, we have $|f(\vx)| \le L$.  
Then, for any $\delta > 0$, with probability at least $1 - \delta$ over the choice of 
$S = \{\vx_1, \vx_2, \ldots, \vx_M\} \subset \Omega$ drawn i.i.d.\ from distribution $\mu$,
\begin{align}
\label{eq:uniform_convergence}
\sup_{f\in\fF}\left| \frac{1}{M} \sum_{i=1}^M f(\vx_i) - \mathbb{E}_{\vx \sim \mu} \, f(\vx) \right|
\ \le\ 
2\, \mathbb{E}_{S \sim \mu^M} \hat{\tR}_S(\fF)
\;+\;
3L \sqrt{ \frac{2\log(4/\delta)}{M} },
\end{align}
\end{lemma}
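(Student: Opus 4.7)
The plan is to combine a symmetrization argument with a bounded-differences concentration inequality, then union-bound to handle the absolute value. Define the random functional
$$\Phi(S) := \sup_{f\in\mathcal{F}} \left|\frac{1}{M}\sum_{i=1}^M f(\vx_i) - \mathbb{E}_{\vx\sim\mu}[f(\vx)]\right|,$$
together with its one-sided counterparts $\Phi^{\pm}(S)$ obtained by dropping the absolute value and retaining each signed piece.

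First I would observe that each $\Phi^{\pm}$ has bounded differences: replacing a single sample $\vx_i$ by $\vx_i'$ changes the supremum by at most $2L/M$, since $|f|\le L$ uniformly on $\mathcal{F}$. Applying McDiarmid's inequality separately to $\Phi^+$ and $\Phi^-$, each obeys
$$\Phi^{\pm}(S) \le \mathbb{E}_S[\Phi^{\pm}(S)] + L\sqrt{\frac{2\log(2/\delta')}{M}}$$
with probability at least $1-\delta'$ over the draw of $S$.

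Next I would bound $\mathbb{E}_S[\Phi^{\pm}(S)]$ by $2\,\mathbb{E}_S \hat{\tR}_S(\mathcal{F})$ via the standard ghost-sample symmetrization: introduce an independent copy $S' = (\vx_1',\ldots,\vx_M')$ of $S$, write $\mathbb{E}_{\vx\sim\mu}[f(\vx)] = \mathbb{E}_{S'}[\tfrac{1}{M}\sum_i f(\vx_i')]$, push the supremum inside by Jensen's inequality, and insert Rademacher signs $\{\xi_i\}$ using the fact that swapping $\vx_i$ with $\vx_i'$ preserves the joint law of $(S,S')$. Since $\hat{\tR}_S(-\mathcal{F}) = \hat{\tR}_S(\mathcal{F})$ by symmetry of the Rademacher variables, both $\Phi^+$ and $\Phi^-$ inherit the same Rademacher upper bound.

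Finally, setting $\delta' = \delta/2$ and combining the two one-sided estimates via a union bound yields, with probability at least $1-\delta$,
$$\Phi(S) \le 2\,\mathbb{E}_S\hat{\tR}_S(\mathcal{F}) + L\sqrt{\frac{2\log(4/\delta)}{M}},$$
which is strictly sharper than the stated bound; absorbing the factor $L$ into $3L$ gives the form in the lemma. No step presents a serious obstacle: the symmetrization identity is the conceptual core, the bounded-differences verification is immediate from $|f|\le L$, and the rest is a routine union bound. The mildly delicate point is keeping track of which constants depend on $L$ versus which arise from McDiarmid's tail; the loose constant $3L$ in the statement is a convenient cushion that avoids having to optimize these sharply.
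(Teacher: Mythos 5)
Your proposal is correct and reproduces the standard textbook argument (ghost-sample symmetrization to bound $\mathbb{E}_S[\Phi^\pm]$ by $2\,\mathbb{E}_S\hat{\tR}_S(\mathcal{F})$, McDiarmid's bounded-differences inequality with increments $2L/M$, and a union bound over the two signed suprema), which is precisely the proof underlying the result the paper cites from Shalev-Shwartz and Ben-David. The constant you obtain, $L\sqrt{2\log(4/\delta)/M}$, is valid and indeed sharper than the paper's $3L\sqrt{2\log(4/\delta)/M}$, which is merely a looser restatement of the same bound.
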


\begin{remark}
\label{rmk:special_mu}
    Proposition~\ref{prop:boundcom} shows that the \emph{empirical} Rademacher complexity is unaffected by the $G$-invariant structure. Under further assumptions on the data distribution $\vx \sim \mu \in \mathcal{P}(\Omega)$ (see, e.g.,~\cite{pmlr-v54-sokolic17a,chen2025statistical,chen2023sample}), one might expect a $|G|$-dependent reduction in the \emph{expected} Rademacher complexity,
    \begin{align*}
        \tR_M(\mathcal{F}_Q) \coloneqq \mathbb{E}_{S\sim \mu^M}[\hat{\tR}_S(\mathcal{F}_Q)].
    \end{align*}
    where $S = (\vx_1,\ldots, \vx_M)$. Since no assumptions of this type are imposed in the present paper, the non-increase guarantee is the sharpest conclusion available.
\end{remark}

\subsection{Generalization error}
\label{sec:generalization}

Let $f_* \in \mathcal{B}_\Gamma^G(\Omega)$ denote the (unknown) ground-truth $G$-invariant Barron function. Our goal is to learn $f_*$ from noisy observations at data points $\vx_1, \ldots, \vx_M$, where $\vx_i \stackrel{\text{i.i.d.}}{\sim} \mu$ for some unknown distribution $\mu$ on $\Omega$. We formalize this setting as follows.  

\begin{assumption}\label{assump:bound noise}
The training data $S = (\vx_i, y_i)_{i=1}^M$ are drawn i.i.d.\ from the joint distribution $\mathcal{D}$
\begin{align}
\label{eq:joint_distribution}
\vx \sim \mu \in \mathcal{P}(\Omega), 
\qquad y = f_*(\vx) + \varepsilon_0,
\end{align}
where the observation noise $\varepsilon_0$ is bounded and satisfies
\[
\mathbb{P}(|\varepsilon_0| < \tau) = 1, 
\qquad \mathbb{E}[\varepsilon_0^2] = \tau_0, 
\qquad \mathbb{E}[\varepsilon_0 \mid \vx] = 0 \quad \forall \vx \in \Omega,
\]
for constants $\tau, \tau_0 > 0$. Without loss of generality, we set $\tau = 1$.
\end{assumption}

To learn $f_*$ using a statistical learning approach, we consider {\color{black}training a width-$m$ $G$-invariant shallow network $f_m^G(\cdot; \Theta^m)$, where the superscript $m$ indicates that $\Theta^m$ are the parameters of the width-$m$ network}. The associated continuous loss function (population risk) is defined by
\begin{align}
\label{eq:population_loss}
    \color{black}L(\Theta^m) \;=\; \mathbb{E}_{(\vx, y) \sim \mathcal{D}}\Big[\, \ell\big(f^G_m(\vx;\Theta^m),\, y\big)\,\Big],
\end{align}
where $\mathcal{D}$ is given by Eq.~\eqref{eq:joint_distribution}, $\ell$ is taken to be the squared loss {\color{black}$\ell (y, y')= (y-y')^2$}, and $f_m^G(\vx; \Theta^m)$ is again given by
\begin{align}
    \color{black}f^G_m(\vx; \Theta^m) = \frac{1}{m|G|} \sum_{i=1}^m a_i \sum_{g\in G} \sigma\!\bigl(\vw_i \cdot g\vx + b_i\bigr), \quad \Theta^m = (a_i, \vw_i, b_i)_{i=1}^m
\end{align}
{\color{black}Simple calculation leads to
\begin{align}
    \nonumber
    L(\Theta^m) & = \EE_{\vx, y, \varepsilon_0}\left|f_m^G(\vx; \Theta^m)-y\right|^2\\
    \nonumber
    & = \EE_{\vx\sim \mu}~\EE_{y, \varepsilon_0|\vx}\left|f_m^G(\vx; \Theta^m)-f_*(\vx) -\varepsilon_0\right|^2\\
    \nonumber
    & = \EE_{\vx\sim\mu}~\EE_{\varepsilon_0|\vx}[\varepsilon_0^2]+\EE_{\vx\sim\mu}\left|f_m^G(\vx; \Theta^m)-f_*(\vx)\right|^2\\
    \label{eq:L_and_tau}
    & = \tau_0 + \int_{\Omega}\left|f_m^G(\vx; \Theta^m)-f_*(\vx)\right|^2 \D\mu(x)
\end{align}
}

In practice, the population loss~\eqref{eq:population_loss} is not avaiable (due to the unknown joint distribution $\mathcal{D}$), we can only work with the empirical loss based on $M$ training data $(\vx_i, y_i)_{i=1}^M$ sampled from $\mathcal{D}$
\[
\color{black}\hat{L}_M(\Theta^{\color{black}m}) = \frac{1}{M} \sum_{i=1}^M \ell \big( f^G_m(\vx_i; \Theta^{\color{black}m}), y_i \big),
\]
To further remove the constraint on the parameter norm size, in this paper, we will work with the regularized empirical risk, defined as
\[
J_\lambda(\Theta^{\color{black}m})
\;:=\;
\hat{L}_{M}(\Theta^{\color{black}m})
\;+\;
\lambda (\|\Theta^{\color{black}m}\|_{\fP}^2+1) ,
\]
where the norm $\|\Theta^{\color{black}m}\|_\mathcal{P}$ is given by Eq.~\eqref{eq:path_norm_2}. The +1 term at the right-hand side is included only to simplify the proof, which does not affect minimizer. The corresponding estimator is
\[
\color{black}\hat{\Theta}_{M,\lambda}^m
\;:=\;
\arg\min_{\Theta^m} J_\lambda(\Theta^{m}).
\]
Our goal is to bound
\[
\color{black}\int_{\Omega} \big| f^G_m(\vx; \hat{\Theta}_{M,\lambda}^m) - f_*(\vx) \big|^2 \, \D\mu(\vx).
\]


\subsubsection{Estimation of generalization error}

Consider the norm-constrained $G$-invariant {\color{black}width-$m$} shallow network class
\begin{align*}
\fF_{\color{black}Q,m} := \left\{
  f^G_{\color{black}m}(\vx; \Theta^{\color{black}m}) = \frac{1}{m|G|} \sum_{i=1}^m a_i \sum_{g\in G} \sigma\!\bigl(\vw_i \cdot g\vx + b_i\bigr)
   \middle|~\Theta^{\color{black}m} = \left(a_i, \vw_i, b_i\right)_{i=1}^m, ~\|\Theta^{\color{black}m}\|_{\mathcal{P}}\le Q \right\},
\end{align*}
where
\begin{align*}
    \|\Theta^{\color{black}m}\|_{\mathcal{P}}^2 \coloneqq \frac{1}{m} \sum_{i=1}^m \big(|a_i|\,(\|\vw_i\|_1 + |b_i| + 1)\big)^2
\end{align*}
{\color{black}Note that $\mathcal{F}_{Q, m}\subset \mathcal{F}_Q$ defined in Eq.~\eqref{eq:F_q} of Proposition~\ref{prop:boundcom}, as the width of networks in $\mathcal{F}_Q$ is not fixed.} The following lemma provides uniform bounds, on $\Omega$, for elements in ${\color{black}\fF_{Q, m}}$ and in the $\sigma$-activated $\Gamma$-Barron space $\mathcal{B}_{\Gamma}(\Omega)$.

\begin{lemma}\label{lem:uniform bound}
Under Assumption~\ref{assump:bounded_omega}, suppose $\sigma$ is Lipschitz continuous with constant $L_\sigma$. Then:
\begin{enumerate}[(i)]
    \item For any $f^G_{\color{black}m} \in \fF_{Q,\color{black}m}$, 
    \[
    \left|f^G_{\color{black}m}(\vx)\right| \ \le\ (L_\sigma + |\sigma(0)|)Q, \quad \forall \vx \in \Omega.
    \]
    \item For any $f \in \fB_\Gamma(\Omega)$ with $\|f\|_{\fB_\Gamma} \le Q$, 
    \[
    |f(\vx)| \ \le\ (L_\sigma + |\sigma(0)|)Q, \quad \forall \vx \in \Omega.
    \]
\end{enumerate}
\end{lemma}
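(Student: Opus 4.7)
The plan is to reuse the same pointwise bound on a single neuron that was derived inside the proof of Theorem~\ref{app}, and then combine it with either Cauchy--Schwarz (for part (i)) or Jensen's inequality (for part (ii)). The key inequality, already established in the proof of Theorem~\ref{app}, is that for any $(a,\vw,b)$ and any $\vx$ with $\|\vx\|_\infty\le 1$,
\begin{equation*}
|a\,\sigma(\vw\cdot\vx+b)|\;\le\;(L_\sigma+|\sigma(0)|)\,|a|\,(\|\vw\|_1+|b|+1).
\end{equation*}
I would restate this as a one-line lemma at the start of the proof, since both parts of Lemma~\ref{lem:uniform bound} reduce to applying it.

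For part (i), fix $\vx\in\Omega$ and $g\in G$. Since the $G$-action maps $\Omega$ to itself, we have $g\vx\in\Omega$ and hence $\|g\vx\|_\infty\le 1$ by Assumption~\ref{assump:bounded_omega}. Applying the single-neuron bound to $g\vx$ in place of $\vx$ and summing, I would get
\begin{equation*}
|f_m^G(\vx)|\;\le\;\frac{1}{m|G|}\sum_{i=1}^m |a_i|\sum_{g\in G}(L_\sigma+|\sigma(0)|)(\|\vw_i\|_1+|b_i|+1)\;=\;\frac{L_\sigma+|\sigma(0)|}{m}\sum_{i=1}^m |a_i|(\|\vw_i\|_1+|b_i|+1).
\end{equation*}
Then I would apply Cauchy--Schwarz to recognize the right-hand average as at most $\|\Theta^m\|_{\mathcal{P}}$ as defined in Eq.~\eqref{eq:path_norm_2}, which is $\le Q$ by assumption, yielding $(L_\sigma+|\sigma(0)|)Q$.

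For part (ii), let $f\in \mathcal{B}_\Gamma(\Omega)$ with $\|f\|_{\mathcal{B}_\Gamma}\le Q$. For arbitrary $\varepsilon>0$, choose $\rho\in\Gamma$ representing $f$ with $\mathbb{E}_\rho\,|a|^2(\|\vw\|_1+|b|+1)^2\le (1+\varepsilon)^2 Q^2$. Applying the single-neuron bound inside the expectation gives
\begin{equation*}
|f(\vx)|\;\le\;\mathbb{E}_\rho|a\,\sigma(\vw\cdot\vx+b)|\;\le\;(L_\sigma+|\sigma(0)|)\,\mathbb{E}_\rho|a|(\|\vw\|_1+|b|+1),
\end{equation*}
and by Jensen's inequality the last expectation is bounded by $(1+\varepsilon)Q$. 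Sending $\varepsilon\to 0$ concludes the argument.

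There is no real obstacle here; the lemma is essentially a bookkeeping statement that extracts uniform bounds from the same estimate used in Theorem~\ref{app}. The only minor subtlety to flag is the use of $g\vx\in\Omega$ in part (i), which is implicit in the fact that the $G$-action is defined as a map $T:G\times\Omega\to\Omega$ and justifies applying Assumption~\ref{assump:bounded_omega} to the transformed argument.
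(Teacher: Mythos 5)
Your proof is correct and follows essentially the same route as the paper's: bound each neuron by the Lipschitz estimate $|a\,\sigma(\vw\cdot\vx+b)|\le(L_\sigma+|\sigma(0)|)|a|(\|\vw\|_1+|b|+1)$, then pass from the first moment to the $\ell^2$/$L^2$ average via Cauchy--Schwarz (part (i)) and Jensen/Cauchy--Schwarz under $\rho$ (part (ii)). Factoring the single-neuron estimate out as a preliminary step and explicitly noting that $g\vx\in\Omega$ (so Assumption~\ref{assump:bounded_omega} applies to $g\vx$) are small stylistic differences; the paper performs the same Lipschitz estimate inline.
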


\begin{proof}
(i) For any $\vx \in \Omega$, 
\begin{align*}
\left|f^G_{\color{black}m}(\vx)\right| & = \left| \frac{1}{m|G|} \sum_{i=1}^m a_i \sum_{g\in G} \sigma\!\bigl(\vw_i \cdot g\vx + b_i\bigr) \right| \\
&\le \frac{1}{m|G|} \sum_{i=1}^m |a_i| \sum_{g\in G} \big| \sigma\!\bigl(\vw_i \cdot g\vx + b_i\bigr) - \sigma(0) \big| \ +\ \frac{|\sigma(0)|}{m} \sum_{i=1}^m |a_i| \\
& \le \frac{1}{m} \sum_{i=1}^m |a_i|\, L_\sigma (\|\vw_i\|_1 + |b_i|) \ +\ \frac{|\sigma(0)|}{m} \sum_{i=1}^m |a_i| \\
& \le (L_\sigma + |\sigma(0)|) \sqrt{\frac{1}{m} \sum_{i=1}^m \big(|a_i|\,(\|\vw_i\|_1 + |b_i| + 1)\big)^2} \\
& \le (L_\sigma + |\sigma(0)|) Q.
\end{align*}

(ii) If $f \in \fB_\Gamma(\Omega)$ with $\|f\|_{\fB_\Gamma} \le Q$, 
then there exists a probability measure $\rho\in\Gamma$ on $\mathbb{R}^{d+2}$ such that
\[
f(\vx) = \mathbb{E}_{(a,\vw,b)\sim\rho} \big[ a\,\sigma(\vw \cdot \vx + b) \big], 
\quad \forall \vx \in \Omega,
\]
and
\[
\sqrt{ \mathbb{E}_{(a,\vw,b)\sim\rho} \,|a|^2 \, (\|\vw\|_1 + |b| + 1)^2 } 
\ \le\ (1+\varepsilon) Q.
\]
By Lipschitz continuity,
\begin{align*}
|f(\vx)| 
&\le \mathbb{E}_{\rho} \big[ |a|\, |\sigma(\vw \cdot \vx + b) - \sigma(0)| \big] 
  + |\sigma(0)|\,\mathbb{E}_{\rho}[|a|] \\
&\le (L_\sigma + |\sigma(0)|) \,\mathbb{E}_{\rho} \big[ |a|\,(\|\vw\|_1 + |b| + 1) \big] \\
&\le (L_\sigma + |\sigma(0)|) \sqrt{ \mathbb{E}_{\rho} \,|a|^2 (\|\vw\|_1 + |b| + 1)^2 } \\
&\le (L_\sigma + |\sigma(0)|)\,(1+\varepsilon)Q.
\end{align*}
Letting $\varepsilon \to 0$ completes the proof.
\end{proof}


Now we can prove the following proposition.

\begin{proposition}\label{prop:gap-theta}
Suppose Assumptions~\ref{assump:bounded_omega}-\ref{assump:bound noise} all hold. 
For any $\delta > 0$, with probability at least $1 - \delta$ over the choice of the training set $S = (\vx_i, y_i)_{i=1}^M\sim\mathcal{D}^M$, we have
\[
\sup_{\|\Theta^{\color{black}m}\|_{\fP} \le Q} \big| L(\Theta^{\color{black}m}) - \hat{L}_{M}(\Theta^{\color{black}m}) \big|
\ \le\ 
C_\sigma\,(Q+1)^2\left[\sqrt{\frac{\log(2d+2)}{M}}
\;+\;
\sqrt{ \frac{2\log(2/\delta)}{M} }\right],
\]
where $C_\sigma>0$ depends only on $\|f_*\|_{\fB_\Gamma}$ and $\sigma$ (through $L_\sigma$, $|\sigma(0)|$, $\gamma(\sigma)$), but not on $M,d,Q$.
\end{proposition}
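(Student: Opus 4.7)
The plan is to recast the claim as a uniform concentration statement for the squared-loss class
\begin{align*}
\mathcal{L}_{Q,m} \coloneqq \left\{(\vx,y) \mapsto \bigl(f_m^G(\vx;\Theta^m) - y\bigr)^{2} \;\Big|\; \|\Theta^m\|_{\mathcal{P}} \le Q \right\},
\end{align*}
to which I will apply the uniform-convergence tool of Lemma~\ref{lem:gapp}. This requires two ingredients: (i) a uniform pointwise envelope $L$ for elements of $\mathcal{L}_{Q,m}$, and (ii) a bound on the expected empirical Rademacher complexity $\mathbb{E}_S\,\hat{\tR}_S(\mathcal{L}_{Q,m})$, which I will reduce, via a contraction argument, to $\hat{\tR}_S(\fF_{Q,m})$ and then to the estimate already supplied by Proposition~\ref{prop:boundcom}.

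For the envelope, Lemma~\ref{lem:uniform bound}(i) gives $|f_m^G(\vx;\Theta^m)| \le (L_\sigma+|\sigma(0)|)Q$ whenever $\|\Theta^m\|_{\mathcal{P}}\le Q$, while Lemma~\ref{lem:uniform bound}(ii) applied to the ground truth $f_*\in\mathcal{B}_\Gamma^G(\Omega)$ combined with $|\varepsilon_0|\le 1$ from Assumption~\ref{assump:bound noise} yields $|y| \le (L_\sigma+|\sigma(0)|)\|f_*\|_{\mathcal{B}_\Gamma}+1$. Setting $B \coloneqq (L_\sigma+|\sigma(0)|)\bigl(Q+\|f_*\|_{\mathcal{B}_\Gamma}\bigr)+1 = O(Q+1)$, every element of $\mathcal{L}_{Q,m}$ is bounded by $B^2 = O((Q+1)^2)$ on $\Omega\times[-1,1]$.

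For the Rademacher piece, fix a sample $S$; for each $y$ with $|y|\le B$ the map $u\mapsto (u-y)^2$ is $2B$-Lipschitz on $\{u:|u|\le (L_\sigma+|\sigma(0)|)Q\}$. Invoking the $K=1$ (scalar) case of Lemma~\ref{lem:vector_contraction} with $h_i(u)=(u-y_i)^2$ therefore yields $\hat{\tR}_S(\mathcal{L}_{Q,m}) \le 2\sqrt{2}\,B\,\hat{\tR}_S(\fF_{Q,m})$. Since $\fF_{Q,m}\subset \fF_Q$, Proposition~\ref{prop:boundcom} gives $\hat{\tR}_S(\fF_{Q,m})\le 4\gamma(\sigma)Q\sqrt{\log(2d+2)/M}$ deterministically in $S$, so the same bound holds in expectation. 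Plugging this estimate together with $L=B^2$ into Lemma~\ref{lem:gapp} (and absorbing $\log(4/\delta)\le 2\log(2/\delta)$ into the constant) delivers the claimed bound, with $C_\sigma$ depending only on $L_\sigma$, $|\sigma(0)|$, $\gamma(\sigma)$, and $\|f_*\|_{\mathcal{B}_\Gamma}$.

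The main subtlety is the quadratic dependence on $Q$: the squared loss is only \emph{locally} Lipschitz, so the envelope $B\sim Q+1$ must be carried through the contraction step. One factor of $Q+1$ arises from the Lipschitz constant $2B$ of the squared loss, and the second from the factor $Q$ in Proposition~\ref{prop:boundcom} together with the $B^2$ envelope in the second summand of Lemma~\ref{lem:gapp}; together they produce exactly the $(Q+1)^2$ prefactor appearing in the statement. All remaining steps are routine bookkeeping of constants.
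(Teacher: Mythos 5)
Your proof is correct and follows essentially the same route as the paper: apply Lemma~\ref{lem:gapp} to the squared-loss class, obtain the envelope $B=O(Q+1)$ from Lemma~\ref{lem:uniform bound} and the bounded-noise Assumption~\ref{assump:bound noise}, reduce the Rademacher complexity of the loss class to that of $\fF_{Q,m}\subset\fF_Q$ via the $K{=}1$ contraction of Lemma~\ref{lem:vector_contraction}, and invoke Proposition~\ref{prop:boundcom}. The only cosmetic differences are that you retain the $\sqrt2$ factor from the vector contraction (the paper drops it by citing the scalar Ledoux--Talagrand version) and you apply contraction directly with $h_i(u)=(u-y_i)^2$ rather than first passing to the translated class $\{f_m^G-y\}$; these affect only the unnamed constant $C_\sigma$.
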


\begin{proof}
Since the Rademacher complexity is translation-invariant, the class  
\[
\mathcal{H}=\big\{ f^G_{\color{black}m}(\vx; \Theta) - y \ \big|\ \|\Theta^{\color{black}m}\|_{\fP} \le Q \big\}
\]
has the same empirical Rademacher complexity {\color{black} as $\mathcal{F}_{Q, m}$, which is bounded by that of $\mathcal{F}_Q$} in Proposition~\ref{prop:boundcom}.  

By Assumption~\ref{assump:bound noise} and Lemma~\ref{lem:uniform bound}, for all $\vx\in\Omega$ and $\|\Theta^{\color{black}m}\|_{\fP} \le Q$,
\[
\big| f^G_{\color{black}m}(\vx; \Theta) - y \big| \ \le\ B,
\quad\text{where}\quad
B := \big(L_\sigma+|\sigma(0)|\big)\big(Q+\|f_*\|_{\fB_\Gamma}\big) + 1 .
\]

Note that the squared loss $\phi(z)=z^2$ is $2B$-Lipschitz on $[-B,B]$, since $\phi'(z)=2z$. Applying Lemma~\ref{lem:vector_contraction} with $K=1$, together with Proposition~\ref{prop:boundcom}, 
yields the following bound for the Rademacher term in Eq.~\eqref{eq:uniform_convergence}:
\[
  8\,B\,\gamma(\sigma)\,Q\,\sqrt{\frac{\log(2d+2)}{M}}.
\]
Here, the factor $\sqrt{2}$ in Lemma~\ref{lem:vector_contraction} can be dropped in the scalar case $K=1$ (Talagrand–Ledoux contraction), as noted in~\cite{maurer2016vector}.
For the concentration term in Eq.~\eqref{eq:uniform_convergence}, we use  $L=B^2$, obtaining
\[
3B^2\sqrt{\frac{2\log(2/\delta)}{M}}.
\]
Finally, note that $B \le C_1\,(Q+1)$ with 
\[
C_1 := \big(L_\sigma+|\sigma(0)|\big)\big(\|f_*\|_{\fB_\Gamma}+1\big) + 1.
\]
By absorbing constants into
\[
C_\sigma := \max\left\{ 8\,\gamma(\sigma)\,C_1,\ 3C_1^2 \right\},
\]
we obtain the desired bound.
\end{proof}

\begin{proposition}\label{prop:gap theta1}
Suppose Assumptions~\ref{assump:bounded_omega}-\ref{assump:bound noise} all hold. 
For any $\delta > 0$, with probability at least $1 - \delta$ over the choice of the training set $S = (\vx_i, y_i)_{i=1}^M\sim\mathcal{D}^M$, the following uniform bound holds for \emph{all} parameters $\Theta^{\color{black}m}$:
\[
\big| L(\Theta^{\color{black}m}) - \hat{L}_{M}(\Theta^{\color{black}m}) \big|
\ \le\ 
D_\sigma\,\big(\|\Theta^{\color{black}m}\|_{\fP}^2+1\big)\,\left[\sqrt{\frac{\log(2d+2)}{M}}
\;+\;
\sqrt{ \frac{2\log\!\big(2\,C_\zeta\,(\|\Theta^{\color{black}m}\|_{\fP}+1)^2/\delta\big)}{M} }\right],
\]
where $C_\zeta := \sum_{q=1}^\infty \frac{1}{q^2}=\pi^2/6$, and $D_\sigma=4C_\sigma$ depends only on $\|f_*\|_{\fB_\Gamma}$ and $\sigma$.
\end{proposition}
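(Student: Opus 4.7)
The plan is a standard \emph{peeling} (or dyadic union-bound) argument that upgrades the norm-constrained bound of Proposition~\ref{prop:gap-theta} to a uniform bound that is valid for every $\Theta^m$ but scales with $\|\Theta^m\|_{\mathcal{P}}$. First I would fix any integer $q\ge 1$ and apply Proposition~\ref{prop:gap-theta} with the choices $Q=q$ and confidence parameter $\delta_q := \delta/(C_\zeta q^2)$, where $C_\zeta = \sum_{q\ge 1} q^{-2} = \pi^2/6$. This yields an event $E_q$, of probability at least $1-\delta_q$, on which
\[
\sup_{\|\Theta^m\|_{\mathcal{P}}\le q}\bigl|L(\Theta^m)-\hat L_M(\Theta^m)\bigr|
\le C_\sigma (q+1)^2\!\left[\sqrt{\tfrac{\log(2d+2)}{M}}+\sqrt{\tfrac{2\log(2C_\zeta q^2/\delta)}{M}}\right].
\]

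Next I would take a union bound over $q\in\mathbb{N}$. Since $\sum_{q\ge 1}\delta_q = \delta$, the event $E := \bigcap_{q\ge 1} E_q$ has probability at least $1-\delta$, and on $E$ the previous display holds \emph{simultaneously} for all $q$. Now for an arbitrary parameter $\Theta^m$ I would pick $q^\star := \lceil\|\Theta^m\|_{\mathcal{P}}\rceil$ so that $\|\Theta^m\|_{\mathcal{P}}\le q^\star \le \|\Theta^m\|_{\mathcal{P}}+1$. Plugging $q=q^\star$ into the bound for $E_{q^\star}$ converts the $Q$-dependent prefactor into something controlled by $\|\Theta^m\|_{\mathcal{P}}$.

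The remaining step is cosmetic: replace $(q^\star+1)^2$ and $\log(2C_\zeta (q^\star)^2/\delta)$ by quantities in $\|\Theta^m\|_{\mathcal{P}}$. From $q^\star\le\|\Theta^m\|_{\mathcal{P}}+1$ one has $(q^\star+1)^2\le(\|\Theta^m\|_{\mathcal{P}}+2)^2\le 4(\|\Theta^m\|_{\mathcal{P}}^2+1)$ after using the elementary inequality $(x+2)^2\le 4(x^2+1)$ for $x\ge 0$ (verify by splitting cases $x\le 1$ and $x\ge 1$, or by $(x+2)^2 = x^2+4x+4\le x^2+2(x^2+1)+4\cdot 1$ and absorbing), and $(q^\star)^2\le(\|\Theta^m\|_{\mathcal{P}}+1)^2$. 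Setting $D_\sigma := 4C_\sigma$ yields the stated bound.

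The only genuine subtlety is ensuring that the weights $\delta_q$ are summable so that the union bound over the countable peeling gives total failure probability $\delta$; this is exactly what the $1/q^2$ weighting (and the accompanying $C_\zeta$ factor that appears inside the logarithm) is designed to handle. The rest of the argument is bookkeeping: one applies Proposition~\ref{prop:gap-theta} on each shell and then picks the smallest shell containing $\Theta^m$.
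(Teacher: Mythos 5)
Your proposal is essentially identical to the paper's proof: the same $1/q^2$ weighting with $C_\zeta=\pi^2/6$, the same union bound over shells, the same choice $q^\star=\lceil\|\Theta^m\|_{\mathcal{P}}\rceil$, and the same absorption of constants into $D_\sigma=4C_\sigma$. One small caveat, which applies to the paper as well: the elementary inequality $(x+2)^2\le 4(x^2+1)$ for $x\ge 0$ is actually false for $0<x<4/3$ (e.g.\ at $x=1$: $9>8$; the tight constant is $\max_{x\ge 0}(x+2)^2/(x^2+1)=5$, attained at $x=1/2$), and neither of your suggested verifications repairs this — the split-cases check and the "absorb" argument both still need $x\ge\sqrt{2}$ or so. This is harmless since one can simply take $D_\sigma=5C_\sigma$, but as written the last cosmetic step is not correct.
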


\begin{proof}
Fix $Q\in\mathbb{N}$ and set $\delta_Q := \delta/(C_\zeta Q^2)$. 
By Proposition~\ref{prop:gap-theta}, with probability at least $1-\delta_Q$,
\[
\sup_{\|\Theta^{\color{black}m}\|_{\fP} \le Q} \big| L(\Theta^{\color{black}m}) - \hat{L}_{M}(\Theta^{\color{black}m}) \big|
\ \le\ 
C_\sigma\,(Q+1)^2\,\left[\sqrt{\frac{\log(2d+2)}{M}}
\;+\;
\sqrt{ \frac{2\log(2/\delta_Q)}{M} }\right].
\]
By the \emph{union bound} (Boole’s inequality) and $\sum_{Q=1}^\infty \delta_Q=\delta$, the above holds simultaneously for all $Q\in\mathbb{N}$ with probability at least $1-\delta$.

Given arbitrary $\Theta^{\color{black}m}$, take $Q_0:=\lceil \|\Theta^{\color{black}m}\|_{\fP}\rceil$ so that $\|\Theta^{\color{black}m}\|_{\fP}\le Q_0$ and $Q_0\le \|\Theta^{\color{black}m}\|_{\fP}+1$. Plugging $Q=Q_0$ yields
\[
\big| L(\Theta^{\color{black}m}) - \hat{L}_{M}(\Theta^{\color{black}m}) \big|
\ \le\ 
C_\sigma\,(\|\Theta^{\color{black}m}\|_{\fP}+2)^2\,\left[\sqrt{\frac{\log(2d+2)}{M}}
\;+\;
\sqrt{ \frac{2\log\!\big(2\,C_\zeta\,(\|\Theta^{\color{black}m}\|_{\fP}+1)^2/\delta\big)}{M} }\right].
\]
Finally, $(\|\Theta^{\color{black}m}\|_{\fP}+2)^2 \le 4(\|\Theta^{\color{black}m}\|_{\fP}^2+1)$ gives the stated form with $D_\sigma:=4C_\sigma$.
\end{proof}

Now we can state two important bounds based on Proposition~\ref{prop:gap theta1}.
\begin{proposition}\label{prop: two bounds}
Let $f_*\in \fB_\Gamma^G(\Omega)$ be the target $G$-invariant Barron function, and let ${\color{black}f^G_m(\cdot; \Theta_*^{m})}$ be the {\color{black}width-$m$} shallow network approximating $f_*$, as given in Eq.~\eqref{eq:G_approx_m} of Theorem~\ref{app}. If
\[
\lambda \ \ge \ D_\sigma\,\sqrt{\frac{\log(2d+2)}{M}},
\]
then with probability at least $1-\delta$ we have:
\begin{align}
\label{eq:bound_on_J_theta_*}
J_\lambda(\Theta_*^m)
\ \le\ 
L(\Theta_*^m) + 2\lambda \,(2\|f_*\|^2_{\fB_\Gamma}+1)
+D_\sigma\,(2\|f_*\|^2_{\fB_\Gamma}+1)
\sqrt{ \frac{2\log\!\big(2\,C_\zeta\,(\sqrt{2}\|f_*\|_{\fB_\Gamma}+1)^2/\delta\big)}{M} },
\end{align}
and
\begin{align}
\label{eq:bound_on_J_theta_hat}
\left\{
\begin{aligned}
& J_\lambda(\hat{\Theta}^m_{M,\lambda}) \ \le\ J_\lambda(\Theta_*^m),\\
& \|\hat{\Theta}^m_{M,\lambda}\|^2_{\fP}
\ \le\ \frac{L(\Theta^m_*)}{\lambda} 
+(2\|f_*\|^2_{\fB_\Gamma}+1)
\left(\sqrt{2\log\!\big(2\,C_\zeta\,(\sqrt{2}\|f_*\|_{\fB_\Gamma}+1)^2/\delta\big) }+2\right).
\end{aligned}\right.
\end{align}
\end{proposition}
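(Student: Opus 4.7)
The plan is to derive both inequalities by chaining three ingredients: the width-$m$ approximant $\Theta_*^m$ from Theorem~\ref{app}, which guarantees $\|\Theta_*^m\|_{\fP}^2\le 2\|f_*\|_{\fB_\Gamma}^2$; the uniform concentration statement of Proposition~\ref{prop:gap theta1}; and the fact that $\hat{\Theta}_{M,\lambda}^m$ minimizes $J_\lambda$. The two inequalities are produced in sequence: first bound $J_\lambda(\Theta_*^m)$, then use that bound on the right-hand side of the minimizer property.

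For \eqref{eq:bound_on_J_theta_*}, I would begin with $J_\lambda(\Theta_*^m)=\hat{L}_M(\Theta_*^m)+\lambda(\|\Theta_*^m\|_{\fP}^2+1)$ and apply Proposition~\ref{prop:gap theta1} at the single point $\Theta_*^m$. This yields a concentration term of the form $D_\sigma(\|\Theta_*^m\|_{\fP}^2+1)\bigl[\sqrt{\log(2d+2)/M}+\sqrt{2\log(2C_\zeta(\|\Theta_*^m\|_{\fP}+1)^2/\delta)/M}\bigr]$. The assumption $\lambda\ge D_\sigma\sqrt{\log(2d+2)/M}$ absorbs the first square root into the regularizer, upgrading $\lambda(\|\Theta_*^m\|_{\fP}^2+1)$ to $2\lambda(\|\Theta_*^m\|_{\fP}^2+1)$. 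Finally, using $\|\Theta_*^m\|_{\fP}^2+1\le 2\|f_*\|_{\fB_\Gamma}^2+1$ and $(\|\Theta_*^m\|_{\fP}+1)^2\le(\sqrt{2}\|f_*\|_{\fB_\Gamma}+1)^2$ (via monotonicity of the logarithm) delivers the stated form.

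For \eqref{eq:bound_on_J_theta_hat}, the first inequality is immediate from the definition of $\hat{\Theta}_{M,\lambda}^m$. For the norm bound, I would drop the nonnegative $\hat{L}_M$-term to obtain $\lambda\|\hat{\Theta}_{M,\lambda}^m\|_{\fP}^2\le\lambda(\|\hat{\Theta}_{M,\lambda}^m\|_{\fP}^2+1)\le J_\lambda(\hat{\Theta}_{M,\lambda}^m)\le J_\lambda(\Theta_*^m)$, divide by $\lambda$, and plug in \eqref{eq:bound_on_J_theta_*}. Three pieces emerge: $L(\Theta_*^m)/\lambda$, kept as is; a $2(2\|f_*\|_{\fB_\Gamma}^2+1)$ from dividing $2\lambda(\cdots)$ by $\lambda$; and a residual of the form $(D_\sigma/\lambda)(2\|f_*\|_{\fB_\Gamma}^2+1)\sqrt{2\log(\cdots)/M}$. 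Using the $\lambda$-threshold once more gives $D_\sigma/(\lambda\sqrt{M})\le 1/\sqrt{\log(2d+2)}\le 1$ for any $d\ge 1$, so the residual collapses to $(2\|f_*\|_{\fB_\Gamma}^2+1)\sqrt{2\log(\cdots)}$; combining with the $+2$ from the preceding piece produces the stated bracket.

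The only delicate bookkeeping is the double use of the $\lambda$-threshold: once to absorb the concentration into the regularizer inside \eqref{eq:bound_on_J_theta_*}, and once to control the same residual after division by $\lambda$ in the norm bound. Everything else is direct substitution and monotonicity estimates. Since the stochasticity enters only through a single invocation of Proposition~\ref{prop:gap theta1}, both conclusions hold on the same $1-\delta$ event, as stated.
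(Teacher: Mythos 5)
Your proof is correct and follows essentially the same route as the paper's: apply Proposition~\ref{prop:gap theta1} at $\Theta_*^m$, use $\|\Theta_*^m\|_{\fP}^2\le 2\|f_*\|_{\fB_\Gamma}^2$, absorb the $\sqrt{\log(2d+2)/M}$ term into the regularizer via the $\lambda$-threshold, then use the minimizer property and a second invocation of the threshold (together with $\sqrt{\log(2d+2)}\ge 1$) for the norm bound. The bookkeeping observation about the double use of the $\lambda$-threshold and the single-event remark are both accurate.
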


\begin{proof}
(i)  
To establish Eq.~\eqref{eq:bound_on_J_theta_*}, observe that Proposition~\ref{prop:gap theta1}, together with the bound $\|\Theta_*^m\|_{\fP} \le \sqrt{2}\,\|f_*\|_{\fB_\Gamma}$, implies that
\begin{align*}
J_\lambda(\Theta_*^m)
&= \hat{L}_{M}(\Theta_*^m) + \lambda  (\|\Theta_*^m\|_{\fP}^2+1) \\
&\le L(\Theta_*^m) + \lambda \,(2\|f_*\|^2_{\fB_\Gamma}+1)
+ \\
&\hspace{5em} D_\sigma\,(2\|f_*\|^2_{\fB_\Gamma}+1)\left(\sqrt{\frac{\log(2d+2)}{M}}
\;+\;
\sqrt{ \frac{2\log\!\big(2\,C_\zeta\,(\sqrt{2}\|f_*\|_{\fB_\Gamma}+1)^2/\delta\big)}{M} }\right)\\
&\le L(\Theta_*^m) + 2\lambda \,(2\|f_*\|^2_{\fB_\Gamma}+1)
+D_\sigma\,(2\|f_*\|^2_{\fB_\Gamma}+1)
\sqrt{ \frac{2\log\!\big(2\,C_\zeta\,(\sqrt{2}\|f_*\|_{\fB_\Gamma}+1)^2/\delta\big)}{M} },
\end{align*}
where the last inequality is based on \(
\lambda \ \ge \ D_\sigma\,\sqrt{\frac{\log(2d+2)}{M}}
\).

(ii)  
To prove Eq.~\eqref{eq:bound_on_J_theta_hat}, note that the first inequality \(J_\lambda(\hat{\Theta}^m_{M,\lambda}) \ \le\ J_\lambda(\Theta^m_*)\) follows directly from the definition of $\hat{\Theta}^m_{M,\lambda}$.  
For the second, we use
\[\lambda\|\hat{\Theta}^m_{M,\lambda}\|^2_{\fP}\le J_\lambda(\hat{\Theta}^m_{M,\lambda}) \ \le\ J_\lambda(\Theta^m_*)\]
Then based on the results in (i), we have
\begin{align*}
\|\hat{\Theta}^m_{M,\lambda}\|^2_{\fP}
&\le \frac{J_\lambda(\Theta^m_*)}{\lambda } \\
&\le \frac{L(\Theta^m_*)}{\lambda} + 2 \,(2\|f_*\|^2_{\fB_\Gamma}+1)
+(2\|f_*\|^2_{\fB_\Gamma}+1)
\frac{\sqrt{2\log\!\Big(2\,C_\zeta\,(\sqrt{2}\|f_*\|_{\fB_\Gamma}+1)^2/\delta\Big)}}{\sqrt{\log(2d+2)}} \\
&\le \frac{L(\Theta^m_*)}{\lambda} + 2 \,(2\|f_*\|^2_{\fB_\Gamma}+1)
+(2\|f_*\|^2_{\fB_\Gamma}+1)
{\sqrt{2\log\!\Big(2\,C_\zeta\,(\sqrt{2}\|f_*\|_{\fB_\Gamma}+1)^2/\delta\Big)}} ,
\end{align*}
where in the last two steps we used $\sqrt{\log(2d+2)} \ge 1$ and the lower bound on $\lambda$.
\end{proof}

\begin{theorem}\label{thm:gen}
Suppose Assumptions~\ref{assump:bounded_omega}-\ref{assump:bound noise} all hold.
Then for $f_*\in\fB_{\Gamma}^G(\Omega)$ and
\begin{align}
\label{eq:cond_on_lambda}
\lambda \ \ge \ \max\left\{ D_\sigma\,\sqrt{\frac{\log(2d+2)}{M}}, \ \frac{3 (L_\sigma+|\sigma(0)|)^2 \delta_{G,\Gamma,\sigma} \|f_*\|_{\fB_\Gamma}^{2}}{m}, \right\}
\end{align}
we have for any \(\delta>0\), with probability at least \(1-\delta\) over the choice of training set $S = (\vx_i, y_i)_{i=1}^M\sim\mathcal{D}^M$,
\begin{align}
\nonumber
    \int_\Omega |f_m^G(\vx;\hat{\Theta}^m_{M,\lambda}) - f_*(\vx)|^2\D\mu\le
    & \frac{3 (L_\sigma+|\sigma(0)|)^2 \delta_{G,\Gamma,\sigma} \|f_*\|_{\fB_\Gamma}^{2}}{m}\\
    \nonumber
    &+2\lambda \,(2\|f_*\|^2_{\fB_\Gamma}+1)
    + D_\sigma\,(2\|f_*\|^2_{\fB_\Gamma}+1)(\Lambda_f(\delta)-2)
    {M^{-\frac{1}{2}}}\\
    \label{eq:main_theorem}
    &+ D_\sigma\,(R_f(\delta)+\frac{\tau_0}{\lambda}+1)
    \sqrt{ \frac{2\log\!\left( \frac{4\,C_\zeta\,(R_f(\delta)+\frac{\tau_0}{\lambda}+1)}{\delta} \right)}{M} }.
\end{align}
where
\[
\Lambda_f(\delta) := \sqrt{ 2\log\!\left( \frac{2\,C_\zeta\,(\sqrt{2}\|f_*\|_{\fB_\Gamma}+1)^2}{\delta} \right) } + 2,
\quad
R_f(\delta) := 1+(2\|f_*\|_{\fB_\Gamma}^2+1)\,\Lambda_f(\delta).
\]
\end{theorem}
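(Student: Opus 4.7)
The plan is to express the $L^2$ excess error through the population risk via Eq.~\eqref{eq:L_and_tau},
\begin{align*}
\int_\Omega |f_m^G(\vx;\hat{\Theta}^m_{M,\lambda}) - f_*(\vx)|^2\,\D\mu = L(\hat{\Theta}^m_{M,\lambda}) - \tau_0,
\end{align*}
and then to decompose
\begin{align*}
L(\hat{\Theta}^m_{M,\lambda}) \le \big|L(\hat{\Theta}^m_{M,\lambda}) - \hat{L}_M(\hat{\Theta}^m_{M,\lambda})\big| + \hat{L}_M(\hat{\Theta}^m_{M,\lambda}).
\end{align*}
Each piece is then controlled by a previously established ingredient: the empirical-risk piece via the minimizing property of $\hat{\Theta}^m_{M,\lambda}$ together with Proposition~\ref{prop: two bounds}(i) and Theorem~\ref{app}, and the concentration gap via Proposition~\ref{prop:gap theta1} combined with Proposition~\ref{prop: two bounds}(ii). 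All bounds hold on a single high-probability event coming from one application of Proposition~\ref{prop:gap theta1}, so no extra union bound is needed beyond those already encoded in $\Lambda_f$ and $R_f$.

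For the empirical-risk piece, I would chain $\hat{L}_M(\hat{\Theta}^m_{M,\lambda}) \le J_\lambda(\hat{\Theta}^m_{M,\lambda}) \le J_\lambda(\Theta_*^m)$, where the first inequality drops the nonnegative penalty $\lambda(\|\hat{\Theta}^m_{M,\lambda}\|_{\mathcal{P}}^2+1)$ and the second uses that $\hat{\Theta}^m_{M,\lambda}$ minimizes $J_\lambda$, with $\Theta_*^m$ being the reference parameter supplied by Theorem~\ref{app}. Proposition~\ref{prop: two bounds}(i) then bounds $J_\lambda(\Theta_*^m)$ by $L(\Theta_*^m)$ plus the explicit regularization term $2\lambda(2\|f_*\|^2_{\fB_\Gamma}+1)$ and a $\sqrt{\log/M}$ concentration piece, while Theorem~\ref{app} together with Eq.~\eqref{eq:L_and_tau} gives
\begin{align*}
L(\Theta_*^m) \le \tau_0 + \frac{3(L_\sigma+|\sigma(0)|)^2\delta_{G,\Gamma,\sigma}\|f_*\|^2_{\fB_\Gamma}}{m}.
\end{align*}
After subtracting $\tau_0$, this reproduces the first three terms of \eqref{eq:main_theorem}.

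The concentration gap is where I expect the main bookkeeping difficulty, because invoking Proposition~\ref{prop:gap theta1} at the a priori unknown parameter $\hat{\Theta}^m_{M,\lambda}$ requires controlling $\|\hat{\Theta}^m_{M,\lambda}\|_{\mathcal{P}}$. This is done by Proposition~\ref{prop: two bounds}(ii) combined with the lower bound $\lambda \ge 3(L_\sigma+|\sigma(0)|)^2\delta_{G,\Gamma,\sigma}\|f_*\|^2_{\fB_\Gamma}/m$ from \eqref{eq:cond_on_lambda}: this condition ensures that the approximation contribution $L(\Theta_*^m)/\lambda$ collapses into $\tau_0/\lambda + 1$, yielding
\begin{align*}
\|\hat{\Theta}^m_{M,\lambda}\|_{\mathcal{P}}^2 + 1 \le \tfrac{\tau_0}{\lambda} + R_f(\delta) + 1.
\end{align*}
Plugging this back into Proposition~\ref{prop:gap theta1}, and using $(\|\hat{\Theta}^m_{M,\lambda}\|_{\mathcal{P}}+1)^2 \le 2(\|\hat{\Theta}^m_{M,\lambda}\|_{\mathcal{P}}^2 + 1)$ inside the logarithm (which converts the $2C_\zeta$ of Proposition~\ref{prop:gap theta1} into the $4C_\zeta$ appearing in the statement), produces the final $\sqrt{\log/M}$ term of \eqref{eq:main_theorem}. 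The only remaining technical obstacle is the careful tracking of constants so that the log arguments match the precise forms of $\Lambda_f(\delta)$ and $R_f(\delta)$.
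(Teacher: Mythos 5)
The high-level decomposition and the ingredients you cite are exactly those of the paper, and your observation that one application of Proposition~\ref{prop:gap theta1} (a uniform bound over all $\Theta^m$) suffices for both $\Theta_*^m$ and $\hat\Theta^m_{M,\lambda}$ is correct and in fact cleaner than the paper's stated $\delta/2$-plus-$\delta/2$ union bound. However, there is a genuine gap in the chaining step, and it is not merely cosmetic.

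You propose to bound the empirical-risk piece by first \emph{dropping} the penalty, $\hat{L}_M(\hat\Theta^m_{M,\lambda}) \le J_\lambda(\hat\Theta^m_{M,\lambda}) \le J_\lambda(\Theta_*^m)$, and to treat the concentration gap $|L(\hat\Theta) - \hat L_M(\hat\Theta)|$ separately. But Proposition~\ref{prop:gap theta1} bounds that gap by \emph{two} square-root terms, and you only account for the $\sqrt{2\log(\cdot)/M}$ one. The other, $D_\sigma(\|\hat\Theta\|_{\fP}^2+1)\sqrt{\log(2d+2)/M}$, is left dangling. Using the lower bound on $\lambda$ and Proposition~\ref{prop: two bounds}(ii), this leftover is $\le \lambda(\|\hat\Theta\|_{\fP}^2+1) \le \lambda R_f(\delta) + \tau_0 + \lambda$, and that $\tau_0$ does \emph{not} cancel when you finally subtract $\tau_0$ via Eq.~\eqref{eq:L_and_tau}. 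You would end up with a nonvanishing additive error that is not in the theorem. The paper avoids this precisely by \emph{not} dropping the penalty: it writes $L(\hat\Theta) \le \hat L_M(\hat\Theta) + D_\sigma(\|\hat\Theta\|_{\fP}^2+1)\sqrt{\log(2d+2)/M} + D_\sigma(\|\hat\Theta\|_{\fP}^2+1)\sqrt{2\log(\cdot)/M}$, absorbs the first gap term into $\lambda(\|\hat\Theta\|_{\fP}^2+1)$ (using $\lambda \ge D_\sigma\sqrt{\log(2d+2)/M}$), and only then reconstitutes $J_\lambda(\hat\Theta) \le J_\lambda(\Theta_*^m)$. Equivalently, if you insist on your order, you must keep the subtracted $-\lambda(\|\hat\Theta\|_{\fP}^2+1)$ rather than dropping it, so it can cancel the $\sqrt{\log(2d+2)/M}$ term. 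Once that is fixed, the rest of your plan (the bound on $\|\hat\Theta\|_{\fP}^2$, the $(\cdot+1)^2\le 2(\cdot^2+1)$ trick producing $4C_\zeta$, and the final subtraction of $\tau_0$) goes through as in the paper.
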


\begin{proof}
From Proposition~\ref{prop: two bounds}, with probability at least \(1-\tfrac{\delta}{2}\),
\begin{align}
J_\lambda(\hat \Theta^m_{M,\lambda})
&\le L(\Theta^m_*) + 2\lambda \,(2\|f_*\|^2_{\fB_\Gamma}+1)
+ D_\sigma\,(2\|f_*\|^2_{\fB_\Gamma}+1)\,
\sqrt{ \frac{2\log\!\left( \tfrac{2\,C_\zeta\,(\sqrt{2}\|f_*\|_{\fB_\Gamma}+1)^2}{\delta} \right)}{M} },
\label{highevent1}
\end{align}where $\Theta^m_*$ are the parameters of the neural network defined in Theorem~\ref{app}.

Similarly,
\begin{equation}\label{eq:param-norm-step}
\|\hat{\Theta}^m_{M,\lambda}\|^2_{\fP}
\ \le\ \frac{L(\Theta^m_*)}{\lambda} 
+ (2\|f_*\|^2_{\fB_\Gamma}+1)\,\Lambda_f(\delta)
\ \le\ R_f(\delta)+\frac{\tau_0}{\lambda}.
\end{equation}
The last inequality in \eqref{eq:param-norm-step} follows from Eq.~\eqref{eq:L_and_tau}, Theorem~\ref{app} and the lower bound on \(\lambda\). Specifically,
\begin{align*}
\frac{L(\Theta_*^m)}{\lambda} 
&= \frac{\int_{\Omega}|f(\vx;\Theta^m_*)-f_*(\vx)|^2\,\D \mu}{\lambda}
  +\frac{\tau_0}{\lambda} \\
&\le \frac{3 (L_\sigma+|\sigma(0)|)^2 \,\delta_{G,\Gamma,\sigma}\,\|f_*\|_{\fB_\Gamma}^{2}}{m\,\lambda}  
  + \frac{\tau_0}{\lambda}
\ \le\ 1+\frac{\tau_0}{\lambda}.
\end{align*}
Combining this with the first bound in \eqref{eq:param-norm-step} gives
\[
\|\hat{\Theta}^m_{M,\lambda}\|^2_{\fP}
\le R_f(\delta)+\frac{\tau_0}{\lambda},
\]
as required.

From Proposition \ref{prop:gap theta1}, with probability at least \(1-\frac{\delta}{2}\),
\begin{align}
L(\hat \Theta^m_{M,\lambda})
&\le \hat L_{M}(\hat \Theta^m_{M,\lambda})
+ D_\sigma\,\big(\|\hat \Theta^m_{M,\lambda}\|_{\fP}^2+1\big)
\left(\sqrt{ \frac{2\log\!\left( \frac{2\,C_\zeta\,(\|\hat \Theta^m_{M,\lambda}\|_{\fP}+1)^2}{\delta} \right)}{M} }+\,\sqrt{\frac{\log(2d+2)}{M}}\right)
\notag \\
&\le\hat L_{M}(\hat \Theta^m_{M,\lambda})+\lambda\,\big(\|\hat \Theta^m_{M,\lambda}\|_{\fP}^2+1\big)+D_\sigma\,\big(\|\hat \Theta^m_{M,\lambda}\|_{\fP}^2+1\big)
\sqrt{ \frac{2\log\!\left( \frac{2\,C_\zeta\,(\|\hat \Theta^m_{M,\lambda}\|_{\fP}+1)^2}{\delta} \right)}{M} }\notag \\
&=J_\lambda(\hat \Theta^m_{M,\lambda})+D_\sigma\,\big(\|\hat \Theta^m_{M,\lambda}\|_{\fP}^2+1\big)
\sqrt{ \frac{2\log\!\left( \frac{2\,C_\zeta\,(\|\hat \Theta^m_{M,\lambda}\|_{\fP}+1)^2}{\delta} \right)}{M} }\notag \\
&\le J_\lambda(\hat \Theta^m_{M,\lambda})
+ D_\sigma\,\left(R_f(\delta)+\frac{\tau_0}{\lambda}+1\right)
\sqrt{ \frac{2\log\!\left( \frac{4\,C_\zeta\,(R_f(\delta)+\frac{\tau_0}{\lambda}+1)}{\delta} \right)}{M} }.\label{highevent2}
\end{align}
Combining the two high-probability events \eqref{highevent1} and \eqref{highevent2}, we obtain
\begin{align}
    L(\hat \Theta^m_{M,\lambda})
    \;\le\;& L(\Theta^m_*) 
    + 2\lambda \,(2\|f_*\|^2_{\fB_\Gamma}+1) \notag\\
    &+ D_\sigma\,(2\|f_*\|^2_{\fB_\Gamma}+1)(\Lambda_f(\delta)-2)\,M^{-\tfrac{1}{2}} \notag\\
    &+ D_\sigma\Bigl(R_f(\delta)+\tfrac{\tau_0}{\lambda}+1\Bigr)\,
    \sqrt{ \tfrac{2\log\!\Bigl( \tfrac{4\,C_\zeta\,(R_f(\delta)+\tfrac{\tau_0}{\lambda}+1)}{\delta} \Bigr)}{M} },
    \label{eq: noise no}
\end{align} with probability at least \(1-\delta\).

On the other hand, by Theorem~\ref{app} and Eq.~\eqref{eq:L_and_tau}, we can decompose the population risk as
\begin{align}
    L(\hat{\Theta}^m_{M,\lambda})
    &= \int_\Omega \bigl|f^G_m(\vx;\hat{\Theta}^m_{M,\lambda}) - f_*(\vx)\bigr|^2\,\D\mu \;+\; \tau_0, \notag\\
    L(\Theta^m_*)
    &= \int_\Omega \bigl|f^G_m(\vx;\Theta^m_*) - f_*(\vx)\bigr|^2\,\D\mu \;+\; \tau_0 \le \frac{3 (L_\sigma+|\sigma(0)|)^2\, \delta_{G,\Gamma,\sigma}\,\|f_*\|_{\fB_\Gamma}^{2}}{m} \;+\; \tau_0.
    \label{eq:noise omit}
\end{align}

Finally, combining \eqref{eq: noise no} and \eqref{eq:noise omit} yields the desired bound for $\int_\Omega \bigl|f^G_m(\vx;\hat{\Theta}^m_{M,\lambda}) - f(\vx)\bigr|^2\,\D\mu$. This concludes the proof.
\end{proof}

We now discuss the benefit of using a $G$-invariant network to learn a $G$-invariant target function $f_*\in\fB_{\Omega}^G$, as described in Eq.~\eqref{eq:main_theorem} of Theorem~\ref{thm:gen}. The main advantage arises from the factor $\delta_{G,\Gamma,\sigma} \le 1$, which influences the generalization error in the following ways:
\begin{itemize}
    \item \textbf{Approximation error reduction.} The first term in Eq.~\eqref{eq:main_theorem},
    \begin{align*}
        \frac{3 (L_\sigma+|\sigma(0)|)^2 \delta_{G,\Gamma,\sigma} \|f_*\|_{\fB_\Gamma}^{2}}{m},
    \end{align*}
    is reduced by a factor of $\delta_{G,\Gamma,\sigma}$.
    \item \textbf{More flexible choice of regularization.} 
    To reduce the second term,
    \begin{align*}
        2\lambda\,(2\|f_*\|^2_{\fB_\Gamma}+1),
    \end{align*}
    one needs a smaller $\lambda$. Incorporating $G$-invariance permits a broader choice of the regularization parameter while still achieving optimal generalization error, since
        \[
        \lambda \ \ge \ \max\left\{ D_\sigma\,\sqrt{\frac{\log(2d+2)}{M}}, \ \frac{3 (L_\sigma+|\sigma(0)|)^2 \,\delta_{G,\Gamma,\sigma}\, \|f_*\|_{\fB_\Gamma}^{2}}{m} \right\},
        \]
    \item \textbf{Estimation error.} Without additional assumptions on the data distribution $\mu \in \fP(\Omega)$, the estimation error
        \begin{align*}
            D_\sigma\,(2\|f_*\|^2_{\fB_\Gamma}+1)(\Lambda_f(\delta)-2)
            {M^{-\frac{1}{2}}}
        \end{align*}
    is unaffected by the incorporation of $G$-invariance. However, as noted in Remark~\ref{rmk:special_mu}, stronger assumptions on $\mu$, such as those in~\cite{pmlr-v54-sokolic17a,chen2025statistical,chen2023sample}, can yield a $|G|$-dependent reduction of the statistical error.
\end{itemize}

In summary, the effect of incorporating group invariance on generalization error is governed by the factor $\delta_{G,\Gamma,\sigma}$. When $\delta_{G,\Gamma,\sigma} \ll 1$, a significant gain in generalization can be achieved. When it is close to one, little to no improvement is expected, though Theorem~\ref{app} ensures that no deterioration occurs, as $\delta_{G,\Gamma,\sigma} \le 1$.

\section{Discussion on $\delta_{G,\Gamma,\sigma}$}\label{sec: example}
The parameter $\delta_{G,\Gamma,\sigma}$ quantifies how effectively group symmetry reduces redundancy in the representation.
In this section, we provide several examples to illustrate when $\delta_{G,\Gamma,\sigma}\ll 1$ and when it is not.  
The key idea for achieving $\delta_{G,\Gamma,\sigma}\ll 1$ is to have the probability set $\Gamma$ and the activation function $\sigma$ in such a way that, under any probability measure $\rho \in \Gamma$, the functions $\sigma(\vw \cdot g_1 \vx + b)$ and $\sigma(\vw \cdot g_2 \vx + b)$ have only a small overlap for distinct $g_1\neq g_2\in G$.  
In this favorable case, we typically obtain $\delta_{G,\Gamma,\sigma}\simeq \tfrac{1}{|G|}$.  
On the other hand, if there is significant overlap among these functions, then $\delta_{G,\Gamma,\sigma}\simeq 1$, and no such improvement can be expected. Here the notation $\simeq$ means that the quantity is of the same order of magnitude.

\subsection{ReLU-shaped and S-shaped activation functions}

One fundamental class of activation functions is the family of \emph{ReLU-shaped} functions, including
\begin{itemize}
    \item $\mathrm{ReLU}(x) = \max\{x,0\}$
    \item $\mathrm{Softplus}(x) = \log(1 + e^x)$
    \item $\mathrm{SiLU}(x) = x \cdot \mathrm{Sigmoid}(x)$ 
    \item $\mathrm{Mish}(x) = x \cdot \tanh(\mathrm{Softplus}(x))$
    \item $\mathrm{GELU}(x) = x \cdot \int_{-\infty}^x \frac{1}{\sqrt{2\pi}} e^{-t^2/2} \, dt$
    \item SELU and CELU; see Eq.~\eqref{eq:celu_selu}.
\end{itemize}

Another important class is that of \emph{S-shaped} activation functions, such as
\[
\mathrm{Sigmoid}(x) = \frac{1}{1 + e^{-x}}, 
\quad 
\mathrm{Tanh}(x) = \frac{e^{x} - e^{-x}}{e^{x} + e^{-x}}, 
\quad \cdots
\]
A common feature of both classes is that their outputs are very small, and close to zero, whenever $x \leq 0$. The central idea for keeping $\delta_{G,\Gamma,\sigma}$ small is therefore to ensure that $\vw \cdot g_1 \vx$ and $\vw \cdot g_2 \vx$ have little overlap in their positive regions, which holds with high probability under $\rho \in \Gamma$.

For example, let $\Omega\subset\mathbb{R}^1$ and take $G = \{e, r\}$ to be the reflection group defined by $rx = -x$. Consider the affine function $wx+b$ with $w\in\mathbb{R}$ and $b\le 0$. n this case, the graphs of $\operatorname{ReLU}(w x + b)$ and $\operatorname{ReLU}(-w x + b)$ have disjoint supports. For instance, when $w=1$ and $b=-1$, the sketches of $\operatorname{ReLU}(x-1)$ and $\operatorname{ReLU}(-x-1)$ are shown in Fig.~\ref{fig:ReLU}.  

\begin{figure}[h!]
    \centering
    \begin{subfigure}[b]{0.45\textwidth}
        \includegraphics[width=\textwidth]{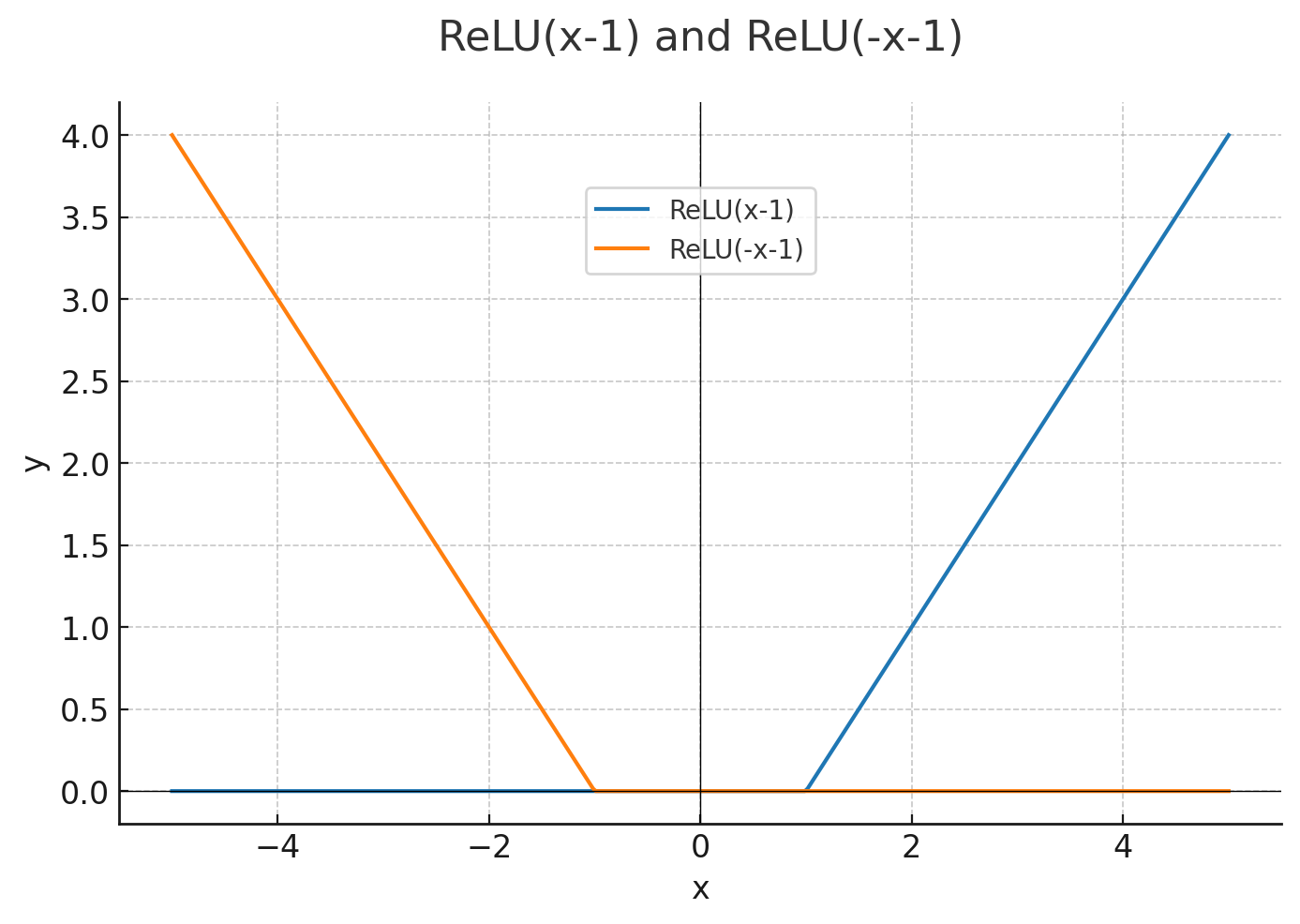}
        \caption{$\operatorname{ReLU}(x-1)$ and $\operatorname{ReLU}(-x-1)$}
        \label{fig:ReLU}
    \end{subfigure}
    \hfill
    \begin{subfigure}[b]{0.45\textwidth}
        \includegraphics[width=\textwidth]{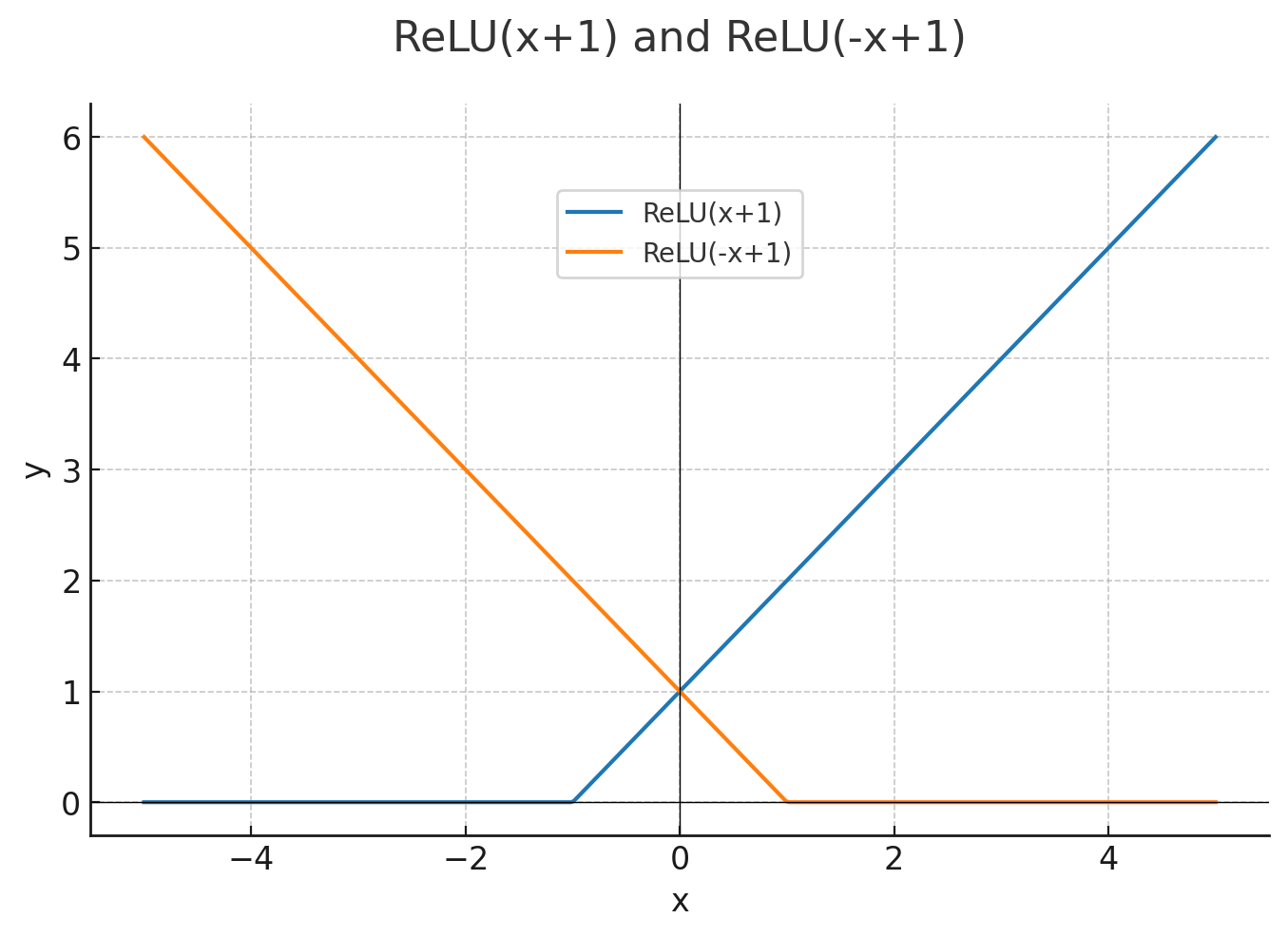}
        \caption{$\operatorname{ReLU}(x+1)$ and $\operatorname{ReLU}(-x+1)$}
        \label{fig:ReLUbad}
    \end{subfigure}
    \caption{ReLU-activated network with one-dimensional input under the reflection group.}
\end{figure}

When the supports are disjoint (Fig.~\ref{fig:ReLU}), group averaging significantly reduces the output value of the neural network. In contrast, when $b>0$, the two functions overlap (Fig.~\ref{fig:ReLUbad}), and the reduction achieved by group averaging is much weaker.
Hence, to make $\delta_{G,\Gamma,\sigma}$ small, the probability measure $\rho \in \Gamma$ should assign high probability to $b \leq 0$.

These ideas extend naturally to the multi-dimensional setting.
For instance, consider the two-dimensional domain $\Omega = [-1,1]^2$ with $G$ the group of $\pi/2$ rotations, i.e., the cyclic group $C_4 \subset SO(2)$.
The generator of $G$ is the $\pi/2$ rotation matrix
\[
   R_{\pi/2} \;=\;
   \begin{pmatrix}
      0 & -1 \\
      1 & \;\;0
   \end{pmatrix},
\]
so that $G = \{I, R_{\pi/2}, R_{\pi}, R_{3\pi/2}\}$. 

If $b \leq 0$, then after applying the group action, the overlap between $\sigma(\vw \cdot g_1 \vx + b)$ and $\sigma(\vw \cdot g_2 \vx + b)$ for $g_1\neq g_2 \in G$ can be very small.  
In particular, if the intersection points of the shifted hyperplane
\[
   \{\vx : \vw \cdot \vx + b = 0\}
\]
with the coordinate axes $x_1=0$ and $x_2=0$ lie entirely outside $\Omega$, i.e. if $-b\ge \|\vw\|_\infty$, then the supports of these activations can be disjoint under all group actions, resulting in zero overlap.  
For example, Fig.~\ref{fig:2d_route} illustrates the case $\vw=(1,1)$ and $b=-\tfrac{3}{2}$.
As a consequence, the group average significantly reduces the output magnitude, making $\delta_{G,\Gamma,\sigma}$ significantly smaller.

\begin{figure}[h!]
    \centering
    \begin{subfigure}[b]{0.45\textwidth}
        \includegraphics[width=\textwidth]{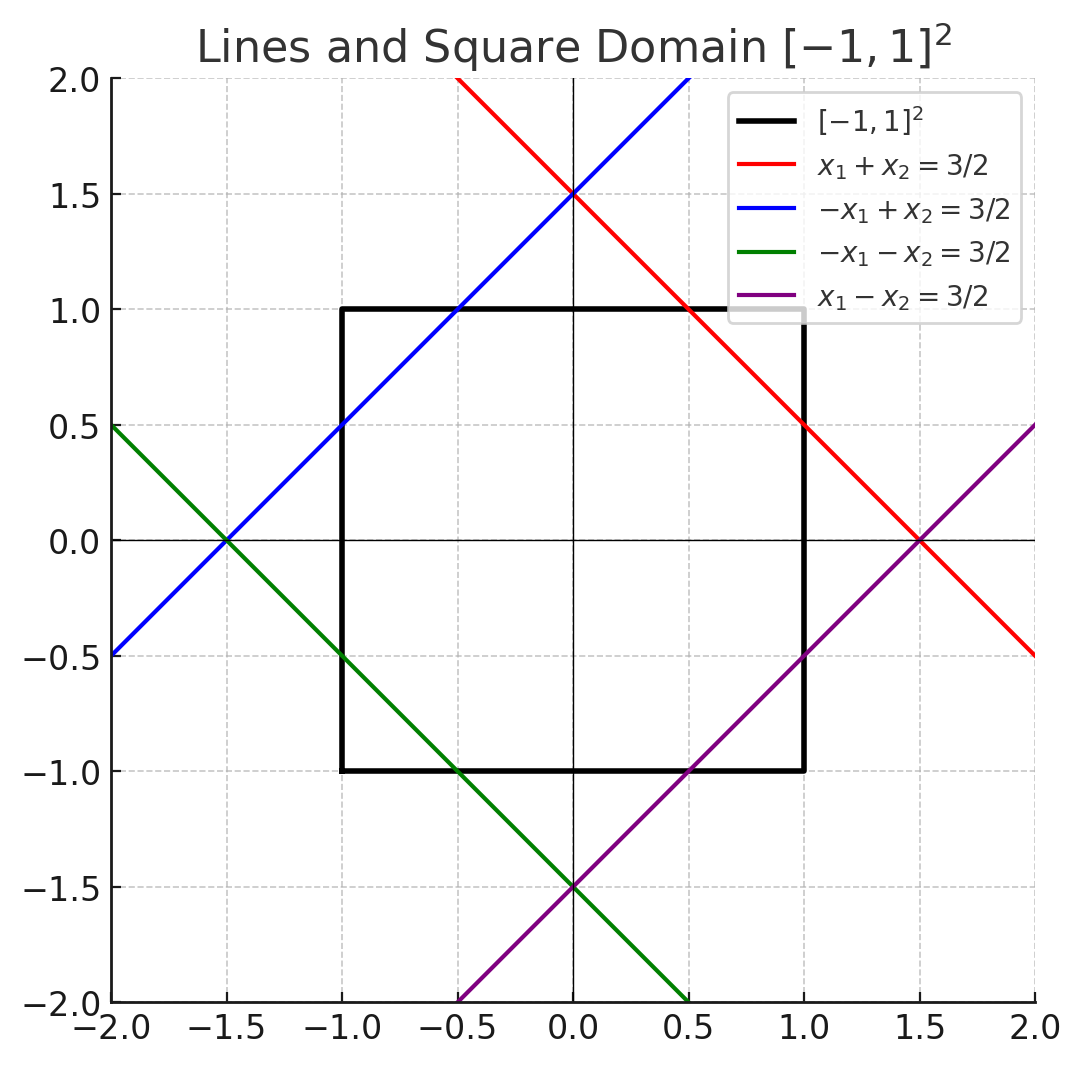}
        \caption{Case $\vw=(1,1)$ and $b=-\tfrac{3}{2}$ with $G = C_4 = \{I, R_{\pi/2}, R_{\pi}, R_{3\pi/2}\}$.}
        \label{fig:2d_route}
    \end{subfigure}
    \hfill
    \begin{subfigure}[b]{0.45\textwidth}
        \includegraphics[width=\textwidth]{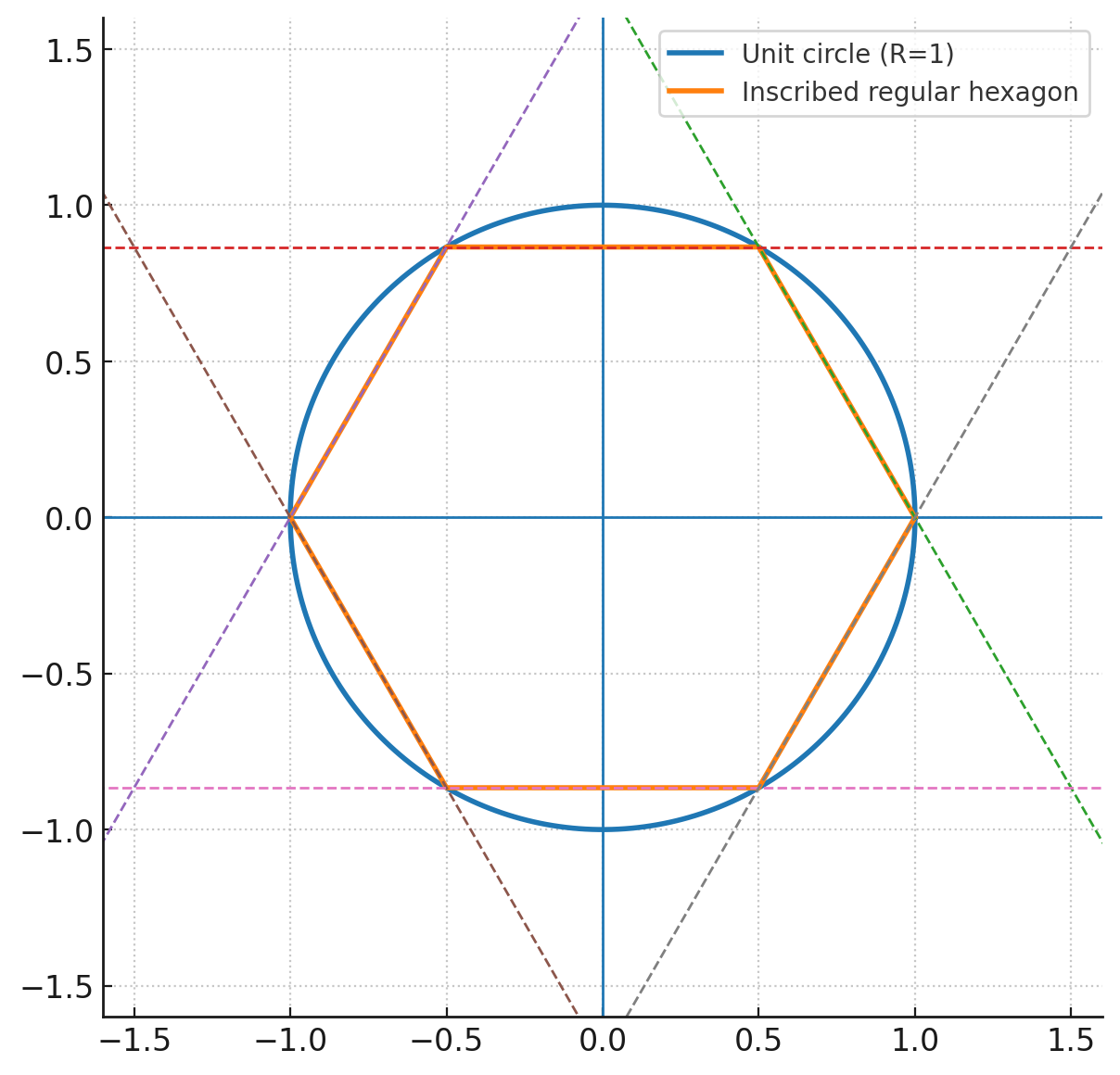}
        \caption{Disjoint-support case on the unit disk with $G= C_6\subset SO(2)$.}
        \label{fig:unit}
    \end{subfigure}
    \caption{ReLU-activated network with two-dimensional input under the discrete cyclic group $C_n\subset SO(2)$.}
\end{figure}

For larger cyclic groups $C_n \subset SO(2)$ with $n>4$, consider their action on the unit disk
\[
\Omega = \{\vx \in \mathbb{R}^2 : x_1^2 + x_2^2 \le 1\}.
\]  
Suppose $b < 0$. If the distance from the origin $(0,0)$ to the hyperplane ${\vx : \vw \cdot \vx + b = 0}$ exceeds the apothem of the inscribed regular $n$-gon (the distance from the origin to any of its sides), then
\[
\frac{-b}{\|\vw\|_2} \;\ge\; \cos\!\left(\frac{\pi}{n}\right).
\]  
In this case, the positive half-spaces ${\vx : \vw \cdot g\vx + b > 0}$ associated with different group elements $g \in C_n$ are disjoint. As a result, the overlaps between $\sigma(\vw \cdot g_1 \vx + b)$ and $\sigma(\vw \cdot g_2 \vx + b)$ for $g_1 \ne g_2$ are negligible or vanish completely, as illustrated in Fig.~\ref{fig:unit}, leading once again to a small $\delta_{G,\Gamma,\sigma}$ of order $\mathcal{O}(1/|G|)$.

\subsection{Compact Support Activation Functions}

Another class of activation functions that can yield a small $\delta_{G,\Gamma,\sigma}$ are those with compact support. As noted in Remark~\ref{rmk: act}, smooth compactly supported activations are included within our theoretical framework.

Consider the symmetric group $S_d$ in $d$ dimensions, which has $d!$ elements. For any $\tau \in S_d$, the group action is defined by
\[
   f(\tau \vx) := f\!\left(x_{\tau(1)}, \ldots, x_{\tau(d)}\right).
\]
Let $\Omega = [0,1]^d$ and partition it into $K^d$ subcubes for some $K \in \mathbb{N}$. Suppose $\sigma$ is an activation function with compact support, and assume that the support of $\sigma(\vw \cdot \vx + b)$ is contained in a single subcube,
\[
   \prod_{i=1}^d \left[ \tfrac{k_i}{K}, \tfrac{k_i+1}{K} \right],
   \quad \vk = (k_1, \ldots, k_d) \in \{0,1,\ldots,K-1\}^d.
\]

When averaging $\sigma(\vw \cdot \vx + b)$ over the group $S_d$, the compact support is distributed across
\[
   \frac{d!}{\prod_{i=1}^d i! \, s_i}
\]
distinct subcubes, where $s_i$ denotes the number of coordinates in $\vk$ that appear exactly $i$ times. If all components of $\vk$ are distinct, then each $\sigma(\vw \cdot \tau \vx + b)$ has disjoint support, and $\delta_{G,\Gamma,\sigma}\simeq 1/d!$. If each component of $\vk$ appears exactly twice (with $d$ even), then the compact supports overlap in pairs, leading to $\delta_{G,\Gamma,\sigma}\simeq 1/(d/2)!$. If all components of $\vk$ are identical, then the supports coincide completely under group averaging, giving $\delta_{G,\Gamma,\sigma}\simeq 1$.  

For illustration, consider the two-dimensional case shown in Fig.~\ref{fig:11}. If the compact support is located at $(0,1)$ or $(1,0)$, the symmetric action maps it to the other point, so the supports are disjoint and $\delta_{G,\Gamma,\sigma}\simeq 1/2$. If the compact support is located at $(0,0)$ or $(1,1)$, the symmetric action leaves it unchanged, so the supports coincide and $\delta_{G,\Gamma,\sigma}\simeq 1$.  

\begin{figure}[h!]
    \centering
    \includegraphics[width=0.5\linewidth]{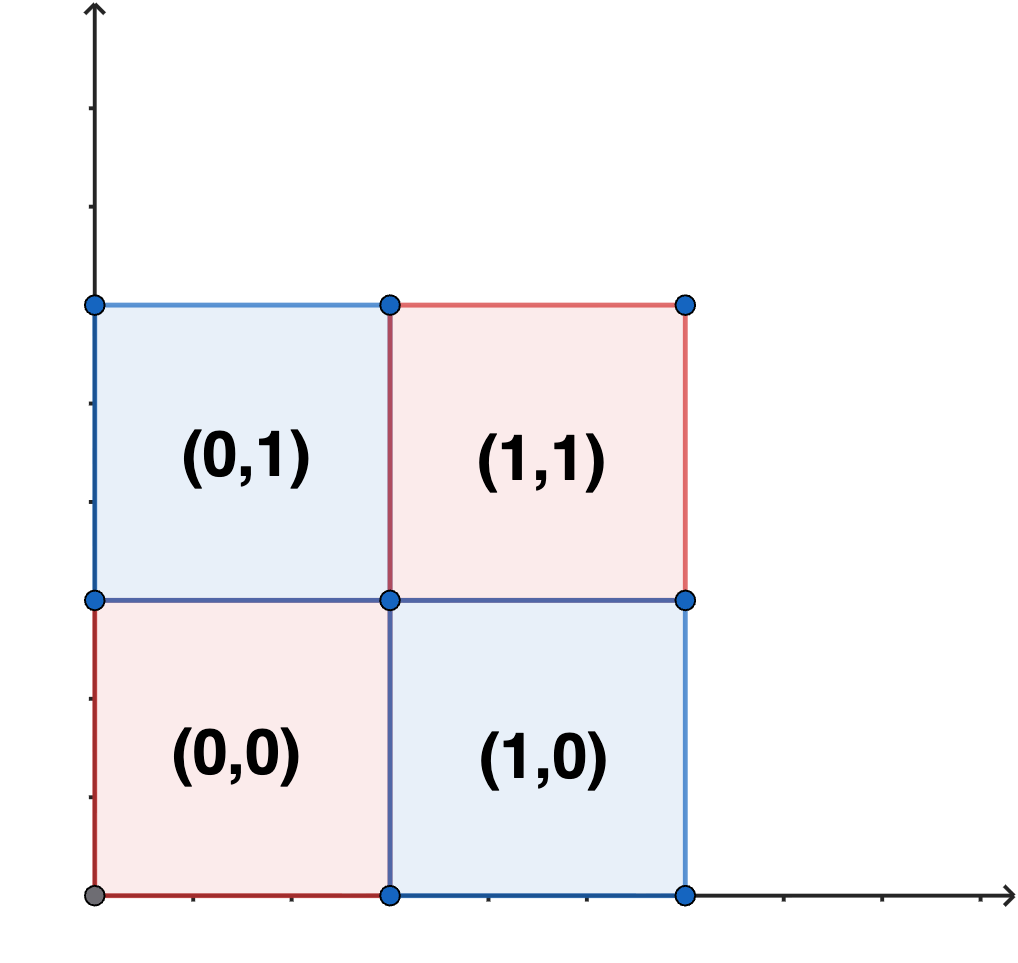}
    \caption{Two-dimensional domain $[0,1]^2$ with compact support activations under $S_2$.}
    \label{fig:11}
\end{figure}

In summary, if the distribution $\rho \in \Gamma$ places high probability on supports corresponding to $\vk$ with many distinct components, then $\delta_{G,\Gamma,\sigma}$ will be small. Conversely, if $\rho$ favors supports with repeated components, then $\delta_{G,\Gamma,\sigma}$ will be close to one. Many other cases can be analyzed, but here we provide two representative examples to illustrate when $\delta_{G,\Gamma,\sigma}$ is small and when it is close to one, depending on the group and the activation function. 

This discussion can also be generalized to the setting of $G$–equivariant functions, such as antisymmetric functions. In that case, the range of possible values for $\delta_{G,\Gamma,\sigma}$ can be much broader, since even when compact sets intersect, sign changes may cancel certain positive contributions to the intersection, thereby altering the value of $\delta_{G,\Gamma,\sigma}$. A detailed analysis of this phenomenon for $G$-equivariant functions is beyond the scope of the present paper and will be left to future work.

\section*{Acknowledgments}
This material is based upon work supported by the U.S. National Science Foundation under the award DMS-2502900 and by the Air Force Office of Scientific Research (AFOSR) under Grant No.~FA9550-25-1-0079.

\bibliographystyle{plain}
\bibliography{references}

\begin{thebibliography}{10}

\bibitem{barron2002universal}
Universal approximation bounds for superpositions of a sigmoidal function.
\newblock {\em IEEE Transactions on Information theory}, 39(3):930--945, 2002.

\bibitem{abrahamsen2025anti}
Nilin Abrahamsen and Lin Lin.
\newblock Anti-symmetric barron functions and their approximation with sums of determinants.
\newblock {\em Journal of Computational Physics}, page 114118, 2025.

\bibitem{allen2019convergence}
Zeyuan Allen-Zhu, Yuanzhi Li, and Zhao Song.
\newblock A convergence theory for deep learning via over-parameterization.
\newblock In {\em International conference on machine learning}, pages 242--252. PMLR, 2019.

\bibitem{anthony1999neural}
M.~Anthony, P.~Bartlett, et~al.
\newblock {\em Neural network learning: Theoretical foundations}, volume~9.
\newblock cambridge university press Cambridge, 1999.

\bibitem{arora2019exact}
Sanjeev Arora, Simon~S Du, Wei Hu, Zhiyuan Li, Russ~R Salakhutdinov, and Ruosong Wang.
\newblock On exact computation with an infinitely wide neural net.
\newblock {\em Advances in neural information processing systems}, 32, 2019.

\bibitem{beck2019machine}
Christian Beck, Weinan E, and Arnulf Jentzen.
\newblock Machine learning approximation algorithms for high-dimensional fully nonlinear partial differential equations and second-order backward stochastic differential equations.
\newblock {\em Journal of Nonlinear Science}, 29(4):1563--1619, 2019.

\bibitem{chen2019efficient}
Minshuo Chen, Haoming Jiang, Wenjing Liao, and Tuo Zhao.
\newblock Efficient approximation of deep relu networks for functions on low dimensional manifolds.
\newblock {\em Advances in neural information processing systems}, 32, 2019.

\bibitem{chen2023phase}
Zhengan Chen, Yuqing Li, Tao Luo, Zhangchen Zhou, and Zhi-Qin~John Xu.
\newblock Phase diagram of initial condensation for two-layer neural networks.
\newblock {\em arXiv preprint arXiv:2303.06561}, 2023.

\bibitem{chen2023regularity}
Ziang Chen, Jianfeng Lu, Yulong Lu, and Shengxuan Zhou.
\newblock A regularity theory for static schr{\"o}dinger equations on d in spectral barron spaces.
\newblock {\em SIAM Journal on Mathematical Analysis}, 55(1):557--570, 2023.

\bibitem{chen2023sample}
Ziyu Chen, Markos Katsoulakis, Luc Rey-Bellet, and Wei Zhu.
\newblock Sample complexity of probability divergences under group symmetry.
\newblock In {\em International Conference on Machine Learning}, pages 4713--4734. PMLR, 2023.

\bibitem{chen2025statistical}
Ziyu Chen, Markos~A Katsoulakis, Luc Rey-Bellet, and Wei Zhu.
\newblock Statistical guarantees of group-invariant gans.
\newblock {\em SIAM/ASA Journal on Uncertainty Quantification}, 13(2):862--890, 2025.

\bibitem{chen2024equivariant}
Ziyu Chen, Markos~A Katsoulakis, and Benjamin~J Zhang.
\newblock Equivariant score-based generative models provably learn distributions with symmetries efficiently.
\newblock {\em arXiv preprint arXiv:2410.01244}, 2024.

\bibitem{chen2023implicit}
Ziyu Chen and Wei Zhu.
\newblock On the implicit bias of linear equivariant steerable networks.
\newblock {\em Advances in Neural Information Processing Systems}, 36:6132--6155, 2023.

\bibitem{du2018gradient}
Simon~S Du, Xiyu Zhai, Barnabas Poczos, and Aarti Singh.
\newblock Gradient descent provably optimizes over-parameterized neural networks.
\newblock {\em arXiv preprint arXiv:1810.02054}, 2018.

\bibitem{weinan2017deep}
W.~E, J.~Han, and A.~Jentzen.
\newblock Deep learning-based numerical methods for high-dimensional parabolic partial differential equations and backward stochastic differential equations.
\newblock {\em \textit{Communications in Mathematics and Statistics}}, 5(4):349--380, 2017.

\bibitem{elesedy2021provably_kernel}
Bryn Elesedy.
\newblock Provably strict generalisation benefit for invariance in kernel methods.
\newblock {\em Advances in Neural Information Processing Systems}, 34:17273--17283, 2021.

\bibitem{elesedy2021provably}
Bryn Elesedy and Sheheryar Zaidi.
\newblock Provably strict generalisation benefit for equivariant models.
\newblock In {\em International conference on machine learning}, pages 2959--2969. PMLR, 2021.

\bibitem{fan2019multiscale}
Yuwei Fan, Lin Lin, Lexing Ying, and Leonardo Zepeda-N{\'u}nez.
\newblock A multiscale neural network based on hierarchical matrices.
\newblock {\em Multiscale Modeling \& Simulation}, 17(4):1189--1213, 2019.

\bibitem{goodfellow2016deep}
Ian Goodfellow, Yoshua Bengio, Aaron Courville, and Yoshua Bengio.
\newblock {\em Deep learning}, volume~1.
\newblock MIT press Cambridge, 2016.

\bibitem{hao2024multiscale}
Wenrui Hao, Rui~Peng Li, Yuanzhe Xi, Tianshi Xu, and Yahong Yang.
\newblock Multiscale neural networks for approximating green's functions.
\newblock {\em arXiv preprint arXiv:2410.18439}, 2024.

\bibitem{hao2024newton}
Wenrui Hao, Xinliang Liu, and Yahong Yang.
\newblock Newton informed neural operator for solving nonlinear partial differential equations.
\newblock {\em Advances in neural information processing systems}, 37:120832--120860, 2024.

\bibitem{he2023mgno}
Juncai He, Xinliang Liu, and Jinchao Xu.
\newblock {MgNO}: Efficient parameterization of linear operators via multigrid.
\newblock In {\em 12th International Conference on Learning Representations, ICLR 2024}, 2024.

\bibitem{hornik1991approximation}
Kurt Hornik.
\newblock Approximation capabilities of multilayer feedforward networks.
\newblock {\em Neural networks}, 4(2):251--257, 1991.

\bibitem{lagaris1998artificial}
Isaac~E Lagaris, Aristidis Likas, and Dimitrios~I Fotiadis.
\newblock Artificial neural networks for solving ordinary and partial differential equations.
\newblock {\em IEEE transactions on neural networks}, 9(5):987--1000, 1998.

\bibitem{li2020complexity}
Zhong Li, Chao Ma, and Lei Wu.
\newblock Complexity measures for neural networks with general activation functions using path-based norms.
\newblock {\em arXiv preprint arXiv:2009.06132}, 2020.

\bibitem{liao2025spectral}
Yulei Liao and Pingbing Ming.
\newblock Spectral barron space for deep neural network approximation.
\newblock {\em SIAM Journal on Mathematics of Data Science}, 7(3):1053--1076, 2025.

\bibitem{liu2024learning}
Fanghui Liu, Leello Dadi, and Volkan Cevher.
\newblock Learning with norm constrained, over-parameterized, two-layer neural networks.
\newblock {\em Journal of Machine Learning Research}, 25(138):1--42, 2024.

\bibitem{liu2020multi}
Ziqi Liu, Wei Cai, and Zhi-Qin~John Xu.
\newblock Multi-scale deep neural network (mscalednn) for solving poisson-boltzmann equation in complex domains.
\newblock {\em arXiv preprint arXiv:2007.11207}, 2020.

\bibitem{lu2021priori}
J.~Lu, Y.~Lu, and M.~Wang.
\newblock A priori generalization analysis of the {Deep Ritz} method for solving high dimensional elliptic partial differential equations.
\newblock In {\em Conference on Learning Theory}, pages 3196--3241. PMLR, 2021.

\bibitem{lu2021learning}
L.~Lu, P.~Jin, G.~Pang, Z.~Zhang, and G.~Karniadakis.
\newblock Learning nonlinear operators via {DeepONet} based on the universal approximation theorem of operators.
\newblock {\em \textit{Nature machine intelligence}}, 3(3):218--229, 2021.

\bibitem{lu2025interpolation}
Shuai Lu and Peter Math{\'e}.
\newblock Interpolation and inverse problems in spectral barron spaces.
\newblock {\em arXiv preprint arXiv:2502.03819}, 2025.

\bibitem{ma2022barron}
Chao Ma and Lei Wu.
\newblock The barron space and the flow-induced function spaces for neural network models.
\newblock {\em Constructive Approximation}, 55(1):369--406, 2022.

\bibitem{ma2019priori}
Chao Ma, Lei Wu, et~al.
\newblock A priori estimates of the population risk for two-layer neural networks.
\newblock {\em Communications in Mathematical Sciences}, 17(5):1407--1425, 2019.

\bibitem{maurer2016vector}
Andreas Maurer.
\newblock A vector-contraction inequality for rademacher complexities.
\newblock In {\em International Conference on Algorithmic Learning Theory}, pages 3--17. Springer, 2016.

\bibitem{poggio2017and}
Tomaso Poggio, Hrushikesh Mhaskar, Lorenzo Rosasco, Brando Miranda, and Qianli Liao.
\newblock Why and when can deep-but not shallow-networks avoid the curse of dimensionality: a review.
\newblock {\em International Journal of Automation and Computing}, 14(5):503--519, 2017.

\bibitem{raissi2019physics}
M.~Raissi, P.~Perdikaris, and G.~Karniadakis.
\newblock Physics-informed neural networks: A deep learning framework for solving forward and inverse problems involving nonlinear partial differential equations.
\newblock {\em \textit{Journal of Computational Physics}}, 378:686--707, 2019.

\bibitem{shalev2014understanding}
S.~Shalev-Shwartz and S.~Ben-David.
\newblock {\em Understanding machine learning: From theory to algorithms}.
\newblock Cambridge university press, 2014.

\bibitem{siegel2020approximation}
Jonathan~W Siegel and Jinchao Xu.
\newblock Approximation rates for neural networks with general activation functions.
\newblock {\em Neural Networks}, 128:313--321, 2020.

\bibitem{siegel2024sharp}
Jonathan~W Siegel and Jinchao Xu.
\newblock Sharp bounds on the approximation rates, metric entropy, and n-widths of shallow neural networks.
\newblock {\em Foundations of Computational Mathematics}, 24(2):481--537, 2024.

\bibitem{pmlr-v54-sokolic17a}
Jure Sokolic, Raja Giryes, Guillermo Sapiro, and Miguel Rodrigues.
\newblock {Generalization Error of Invariant Classifiers}.
\newblock In Aarti Singh and Jerry Zhu, editors, {\em Proceedings of the 20th International Conference on Artificial Intelligence and Statistics}, volume~54 of {\em Proceedings of Machine Learning Research}, pages 1094--1103. PMLR, 20--22 Apr 2017.

\bibitem{tahmasebi2023sample}
Behrooz Tahmasebi and Stefanie Jegelka.
\newblock Sample complexity bounds for estimating probability divergences under invariances.
\newblock {\em arXiv preprint arXiv:2311.02868}, 2023.

\bibitem{tahmasebi2024sample}
Behrooz Tahmasebi and Stefanie Jegelka.
\newblock Sample complexity bounds for estimating probability divergences under invariances.
\newblock In {\em International Conference on Machine Learning}, pages 47396--47417. PMLR, 2024.

\bibitem{xu2024refined}
Xianliang Xu and Zhongyi Huang.
\newblock Refined generalization analysis of the deep ritz method and physics-informed neural networks.
\newblock {\em arXiv preprint arXiv:2401.12526}, 2024.

\bibitem{yang2023nearly}
Y.~Yang, H.~Yang, and Y.~Xiang.
\newblock Nearly optimal {VC}-dimension and pseudo-dimension bounds for deep neural network derivatives.
\newblock In {\em Thirty-seventh Conference on Neural Information Processing Systems}, 2023.

\bibitem{yang2025homotopy}
Yahong Yang, Qipin Chen, and Wenrui Hao.
\newblock Homotopy relaxation training algorithms for infinite-width two-layer relu neural networks.
\newblock {\em Journal of scientific computing}, 102(2):40, 2025.

\bibitem{yang2025deep}
Yahong Yang and Juncai He.
\newblock Deep neural networks with general activations: Super-convergence in sobolev norms.
\newblock {\em arXiv preprint arXiv:2508.05141}, 2025.

\bibitem{yang2024nonparametric}
Yunfei Yang and Ding-Xuan Zhou.
\newblock Nonparametric regression using over-parameterized shallow relu neural networks.
\newblock {\em Journal of Machine Learning Research}, 25(165):1--35, 2024.

\bibitem{yarotsky2017error}
Dmitry Yarotsky.
\newblock Error bounds for approximations with deep relu networks.
\newblock {\em Neural networks}, 94:103--114, 2017.

\bibitem{yarotsky2022universal}
Dmitry Yarotsky.
\newblock Universal approximations of invariant maps by neural networks.
\newblock {\em Constructive Approximation}, 55(1):407--474, 2022.

\bibitem{zhou2020universality}
Ding-Xuan Zhou.
\newblock Universality of deep convolutional neural networks.
\newblock {\em Applied and computational harmonic analysis}, 48(2):787--794, 2020.

\end{thebibliography}

\end{document}